

\documentclass{aamas2017}

%


\usepackage{times}
\usepackage{helvet}
\usepackage{courier}

\usepackage[utf8]{inputenc} 
\usepackage{url}            
\usepackage{booktabs}       
\usepackage{amsfonts}       
\usepackage{nicefrac}       
\usepackage{microtype}      
\usepackage{floatrow}
\usepackage{authblk}

\usepackage{wrapfig}
\usepackage{graphicx} 
\usepackage{fixltx2e}
\usepackage{url}
\usepackage{amsfonts}

\usepackage{amsmath}
\usepackage{algorithm}
\usepackage{algorithmic}
\usepackage {tikz}
\usetikzlibrary {positioning}
\usepackage[font=footnotesize,labelfont=bf]{caption} 
\usepackage{subcaption}
\usepackage[toc,page]{appendix}

\newtheorem{theorem}{Theorem}
\newtheorem{corollary}{Corollary}

\newtheorem{lemma}{Lemma}


\newcommand\iid{i.i.d. }

\pdfpagewidth=8.5truein
\pdfpageheight=11truein

\usepackage{fancyhdr}

\lhead{\normalsize \texttt{In} \textit{~Proceedings of the 16th International Conference on Autonomous Agents and Multiagent Systems} \texttt{(AAMAS 2017),}\\
\texttt{Sao Paolo, Brazil May 2017}}
\cfoot{}
\setlength{\voffset}{5pt}
\setlength{\headsep}{25pt}





\title{Bootstrapping with Models: Confidence Intervals for Off-Policy Evaluation}

%

\begin{document}
\thispagestyle{fancy}

\author[1]{Josiah P. Hanna}
\author[1]{Peter Stone}
\author[1]{Scott Niekum}

\affil[1]{Department of Computer Science, The University of Texas at Austin, Austin, Texas,  U.S.A}

\maketitle

\begin{abstract}

For an autonomous agent, executing a poor policy may be costly or even dangerous. 
For such agents, it is desirable to determine confidence interval lower bounds on the performance of any given policy \emph{without executing said policy}. 
Current methods for exact high confidence off-policy evaluation that use importance sampling require a substantial amount of data to achieve a tight lower bound.
Existing model-based methods only address the problem in discrete state spaces.
Since exact bounds are intractable for many domains we trade off strict guarantees of safety for more data-efficient approximate bounds.
In this context, we propose two bootstrapping off-policy evaluation methods which use learned MDP transition models in order to estimate lower confidence bounds on policy performance with limited data in both continuous and discrete state spaces.
Since direct use of a model may introduce bias, we derive a theoretical upper bound on model bias for when the model transition function is estimated with i.i.d.~trajectories. 
This bound broadens our understanding of the conditions under which model-based methods have high bias.
Finally, we empirically evaluate our proposed methods and analyze the settings in which different bootstrapping off-policy confidence interval methods succeed and fail.
\end{abstract}

\begin{CCSXML}
<ccs2012>
<concept>
<concept_id>10010147.10010178.10010219.10010220</concept_id>
<concept_desc>Computing methodologies~Multi-agent systems</concept_desc>
<concept_significance>500</concept_significance>
</concept>
</ccs2012>
\end{CCSXML}


\printccsdesc


\keywords{Reinforcement learning; Off-policy evaluation; Bootstrapping}

\section{Introduction}


As \textit{reinforcement learning} (RL) agents find application in the real world, it will be critical to establish the performance of policies with high confidence before they are executed.
For example, deploying a poorly performing policy on a manufacturing robot may slow production or, in the worst case, damage the robot or harm humans working around it.
It is insufficient to have a policy that has a high off-policy predicted value --- we want to specify a lower bound on the policy's value that is correct with a pre-determined level of confidence.
This problem is known as the \textit{high confidence off-policy evaluation problem}.
We propose data-efficient approximate solutions to this problem.

High confidence off-policy model-based methods require large amounts of data and are limited to discrete settings \cite{chow2015robust}.
In continuous settings, current methods for high confidence off-policy evaluation rely on importance sampling \cite{precup2000eligibility} with existing domain data \cite{thomas2015off-policy}.
Due to the large variance of importance sampled returns, these algorithms can require prohibitively large amounts of data to produce meaningful confidence bounds. 
The current state-of-the-art for high confidence off-policy evaluation in discrete and continuous settings is a concentration inequality tailored to the distribution of importance sampled returns \cite{thomas2015off-policy}.
Unfortunately, the amount of data required for tight confidence bounds preclude the use of this method in data-scarce settings such as robotics.

Instead of exact high confidence, Thomas et al.\ \shortcite{thomas2015policy} demonstrated that approximate bounds obtained by bootstrapping importance sampled policy returns can improve data-efficiency by an order of magnitude over concentration inequalities.
In this work, we propose two approximate high confidence off-policy evaluation methods through the combination of bootstrapping with learned models of the environment's transition dynamics.
Both methods are straightforward to implement (though seemingly novel) and are empirically demonstrated to outperform importance-sampling methods.

\begin{sloppypar}
Our first contribution, \textit{Model-based Bootstrapping} (\textsc{mb-bootstrap}), directly combines bootstrapping with learned models of the environment's dynamics for off-policy value estimation.
Since \textsc{mb-bootstrap} uses direct model-based estimates of policy value, it may exhibit bias in some settings.
To characterize these settings, we derive an upper bound on model bias for models estimated from arbitrary distributions of trajectories.
Our second algorithmic contribution, \textit{weighted doubly robust bootstrapping} (\textsc{wdr-bootstrap}), combines bootstrapping with the recently proposed weighted doubly robust estimator \cite{thomas2016data-efficient} which uses a model to lower the variance of importance sampling estimators without adding model bias to the estimate.
We empirically evaluate both methods on two high confidence off-policy evaluation tasks. Our results show these methods are far more data-efficient than existing importance sampling based approaches.
Finally, we combine theoretical and empirical results to make specific recommendations about when to use different off-policy confidence bound methods in practice.
\end{sloppypar}

\section{Preliminaries}

\subsection{Markov Decision Processes}
We formalize our problem as a \textit{Markov decision process} (MDP) defined as $( \mathcal{S},\mathcal{A},P,r,\gamma, d_0)$ where
 $\mathcal{S}$ is a set of states, $\mathcal{A}$ is a set of actions, $P: \mathcal{S} \times \mathcal{A} \times \mathcal{S} \rightarrow [0.1]$ is a probability mass function defining a distribution over next states for each state and action, $r: \mathcal{S} \times \mathcal{A} \rightarrow [0,r_\mathtt{max}]$ is a bounded, non-negative reward function, $\gamma \in [0,1]$ is a discount factor, and $d_0$ is a probability mass function over initial states. 
An agent samples actions from a policy, $\pi: \mathcal{S} \times \mathcal{A} \rightarrow [0,1]$, which is a probability mass function on $\mathcal{A}$ conditioned on the current state.
A policy is deterministic if $\pi(a|s)=1$ for only one $a$ in each $s$.\footnote{We define notation for discrete MDPs, however, all results hold for continuous $\mathcal{S}$ and $\mathcal{A}$ by replacing summations with integrals and probability mass functions with probability density functions.}

\begin{sloppypar}
A trajectory, $H$ of length $L$ is a state-action history, $S_0,A_0,\dotsc,S_{L-1},A_{L-1}$ where $S_0 \sim d_0$, $A_{t} \sim \pi(\cdot|s_t)$, and $S_{t+1} \sim P(\cdot|S_t,A_t)$.
The return of a trajectory is $g(H) = \sum_{t=0}^{L-1} \gamma^{t} r(S_t,A_t)$.
The policy, $\pi$, and transition dynamics, $P$, induce a distribution over trajectories, $p_\pi$.
We write $H \sim \pi$ to denote a trajectory sampled by executing $\pi$ (i.e., sampled from $p_\pi$).
The expected discounted return of a policy, $\pi$, is defined as $V(\pi):= E_{H \sim \pi}[g(H)]$.
\end{sloppypar}


Given a set of $n$ trajectories, $\mathcal{D} = \{H_1,..,H_n\}$, where $H_i \sim \pi_b$ for some behavior policy, $\pi_b$, an evaluation policy, $\pi_e$, and a confidence level, $\delta \in [0,1]$, we propose two methods to approximate a confidence lower bound, $V_{\delta}(\pi_e)$, on $V(\pi_e)$ such that $V_{\delta}(\pi_e) \leq V(\pi_e)$ with probability at least $1-\delta$.

\subsection{Importance Sampling}

We define an \textit{off-policy estimator} as any method for computing an estimate, $\widehat{V}(\pi_e)$, of  $V(\pi_e)$ using trajectories from a second policy, $\pi_b$.
\textit{Importance sampling} (\textsc{is}) is one such method \cite{precup2000eligibility}.
For a trajectory $H \sim \pi_b$, where $H = S_1,A_1,\dotsc,S_L,A_L$, we define the importance weight up to time $t$ for policy $\pi_e$ as $ \rho_t^H:=\prod_{i=0}^t \frac{\pi_e(A_i|S_i)}{\pi_b(A_i|S_i)}$.
Then the \textsc{is} estimator of $V(\pi_e)$ with a trajectory, $H \sim \pi_b$ is defined as 
$\text{\textsc{is}}(\pi_e, H, \pi_b):= g(H) \rho_L^H$.
A lower variance version of importance sampling for off-policy evaluation is \textit{per-decision importance sampling}, 
$\text{\textsc{pdis}}(\pi_e, H, \pi_b):= \sum_{t=0}^{L-1} r(S_t,A_t) \rho_t^H$.
We overload \textsc{is} notation to define the batch \textsc{is} estimator for a set of $n$ trajectories, $\mathcal{D}$, so that \textsc{is}$(\mathcal{D}):=\frac{1}{n}\sum_{i=1}^n \text{\textsc{is}}(\pi_e,H_i,\pi_b)$.
The batch \textsc{pdis} estimator is defined similarly.

The variance of batch \textsc{is} estimators can be reduced with \textit{weighted importance sampling} (\textsc{wis}) and \textit{per-decision weighted importance sampling} (\textsc{pdwis}). Define the weighted importance weight up to time $t$ for the $i$\textsuperscript{th} trajectory as $w_t^{H_i}:=\rho_t^{H_i}/\sum_{j=1}^n \rho_t^{H_j}$. Then the \textsc{wis} estimator is defined as: \textsc{wis}$(\mathcal{D}):=\sum_{i=1}^n g(H_i) w_L^{H_i}$. \textsc{pdwis} is defined as \textsc{pdis} with $w_t^{H_i}$ replacing $\rho_t^{H_i}$.

Provided the support of $\pi_e$ is a subset of the support of $\pi_b$, \textsc{is} and \textsc{pdis} are unbiased but potentially high variance estimators.
\textsc{wis} and \textsc{pdwis} have less variance than their unweighted counterparts but introduce bias.
When all $H_i \in \mathcal{D}$ are sampled from the same policy, $\pi_b$, \textsc{wis} and \textsc{pdwis} introduce a particular form of bias. 
Namely, when $n=1$, $\widehat{V}(\pi_e)$ is an unbiased estimate of $V(\pi_b)$. 
As $n$ increases, the estimate shifts from $V(\pi_b)$ towards $V(\pi_e)$.
Thus, \textsc{wis} and \textsc{pdwis} are statistically consistent (i.e., \textsc{wis}$(\mathcal{D})\rightarrow V(\pi_e)$ as $n \rightarrow \infty$) \cite{thomas2015safe}.

\subsection{Bootstrapping}\label{sec:bootstrap}

This section gives an overview of bootstrapping \cite{efron1987better}.
In the next section we propose bootstrapping with learned models to estimate confidence intervals for off-policy estimates.

Consider a sample $X$ of $n$ random variables $X_i$ for $i=1,\dotsc,n$, where we sample $X_i$  i.i.d.~from some distribution $f$.
From the sample, $X$, we can compute a sample estimate, $\hat{\theta}$ of a parameter, $\theta$ such that $\hat{\theta} = t(X)$.
For example, if $\theta$ is the population mean, then $t(X):=\frac{1}{n}\sum_{i=1}^n X_i$.
For a finite sample, we would like to specify the accuracy of $\hat{\theta}$ without placing restrictive assumptions on the sampling distribution of $\hat{\theta}$ (e.g., assuming $\hat{\theta}$ is distributed normally).
Bootstrapping allows us to estimate the distribution of $\hat{\theta}$ from whence confidence intervals can be derived.
Starting from a sample $X=\{X_1,\dotsc,X_n\}$, we create $B$ new samples, $\tilde{X}^j = \{\tilde{X}^j_1,\dotsc,\tilde{X}^j_n\}$, by sampling $\tilde{X}^j_i$ from a bootstrap distribution, $\hat{f}$.
That is, we sample $\hat{f}$ by independently sampling $X_i$ from $X$ \textit{with replacement}.
For each $\tilde{X}^j$ we compute $\hat{\theta}_j = t(\tilde{X}^j)$.
The distribution of the $\hat{\theta}_j$ approximates the distribution of $\hat{\theta}$ which allows us to compute sample confidence bounds.
See the work of Efron \shortcite{efron1979bootstrap} for a more detailed introduction to bootstrapping.


While bootstrapping has strong guarantees as $n\rightarrow \infty$, bootstrap confidence intervals lack finite sample guarantees.
Using bootstrapping requires the assumption that the bootstrap distribution is representative of the distribution of the statistic of interest which may be false for a finite sample.
Therefore, we characterize bootstrap confidence intervals as ``semi-safe" due to this possibly false assumption.
In contrast to lower bounds from concentration inequalities, bootstrapped lower bounds can be thought of as approximating the allowable $\delta$ error rate instead of upper bounding it.
However, bootstrapping is considered safe enough for high risk medical predictions and in practice has a well established record of producing accurate confidence intervals \cite{chambless2003coronary}.
In the context of policy evaluation, Thomas et al.\ \shortcite{thomas2015policy} established that bootstrap confidence intervals with \textsc{wis} can provide accurate lower bounds in the high confidence off-policy evaluation setting.
The primary contribution of our work is to incorporate off-policy estimators of $V$ that use models into bootstrapping to decrease the data requirements needed to produce a tight lower bound.

\section{Off-Policy Bootstrapped Lower Bounds}

In this section, we propose model-based and weighted doubly robust bootstrapping for estimating confidence intervals on off-policy estimates.
First, we present pseudocode for computing a bootstrap confidence lower bound for any off-policy estimator (Algorithm \ref{alg:bootstrap}).
Our proposed methods are instantiations of this general algorithm.

\begin{figure}

\centering
\begin{minipage}[t][4.8cm][t]{\textwidth}
\small{
\begin{algorithm}[H]

{\fontsize{9}{9}\selectfont
\begin{algorithmic}[1]
\INPUT $\pi_e$, $\mathcal{D}$, $\pi_b$, $\delta$, $B$
\OUTPUT $1-\delta$ confidence lower bound on $V(\pi_e)$.

\FORALL{$i \in [1,B]$}
\STATE $\mathcal{\tilde{D}}_i \gets \{H_1^i,\dotsc,H_n^i\}$ where $H_j^i \sim \mathcal{U}({\mathcal{D}})$ // where $\mathcal{U}$ is the uniform distribution
\STATE $\widehat{V}_i \gets $ \textbf{Off-PolicyEstimate($\pi_e,\mathcal{\tilde{D}}_i$,$\pi_b$)}
\ENDFOR

\STATE \texttt{sort}$(\{\widehat{V}_i | i \in [1,B]\})$ // Sort ascending
\STATE $l \gets\lfloor \delta B \rfloor$
\STATE \textbf{Return} $\widehat{V}_l$

\end{algorithmic}
}
\caption{\textbf{Bootstrap Confidence Interval} \newline Input is an evaluation policy $\pi_{e}$, a data set of trajectories, $\mathcal{D}$, a confidence level, $\delta \in [0,1]$, and the required number of bootstrap estimates, $B$.}
\label{alg:bootstrap}
\end{algorithm}}
\end{minipage}

\end{figure}

We define \textbf{Off-PolicyEstimate} to be any method that takes a data set of trajectories, $\mathcal{D}$, the policy that generated $\mathcal{D}$, $\pi_b$, and a policy, $\pi_e$, and returns a policy value estimate, $\widehat{V}(\pi_e)$, (i.e., an off-policy estimator).
Algorithm \ref{alg:bootstrap} is a Bootstrap Confidence Interval procedure in which $\widehat{V}(\pi_e)$, as computed by \textbf{Off-PolicyEstimate}, is the statistic of interest ($\hat{\theta}$ in Section \ref{sec:bootstrap}).
We give pseudocode for a bootstrap lower bound.
The method is equally applicable to upper bounds and two-sided intervals.

The bootstrap method we present is the percentile bootstrap for confidence levels \cite{carpenter2000bootstrap}.
A more sophisticated bootstrap approach is Bias Corrected and Accelerated bootstrapping (BCa) which adjusts for the skew of the distribution of $\widehat{V}_i$.
When using \textsc{is} as \textbf{Off-PolicyEstimate}, BCa can correct for the heavy upper tailed distribution of \textsc{is} returns \cite{thomas2015policy}.\footnote{
If the \textsc{wis} estimator is used for \textbf{Off-PolicyEstimate} then Algorithm \ref{alg:bootstrap} describes a simplified version of the bootstrapping method presented by Thomas et al.\ \cite{thomas2015policy}.}

\subsection{Direct Model-Based Bootstrapping}

We now introduce our first algorithmic contribution---model-based bootstrapping (\textsc{mb-bootstrap}).
The model-based off-policy estimator, \textsc{mb}, computes $\widehat{V}(\pi_e)$ by first using all trajectories in $\mathcal{D}$ to build a model $\widehat{\mathcal{M}} = (\mathcal{S},\mathcal{A},\widehat{P},r,\gamma,\hat{d}_0)$ where $\widehat{P}$ and $\hat{d}_0$ are estimated from trajectories sampled \iid from $\pi_b$.\footnote{The reward function, $r$, may be approximated as $\hat{r}$. Our theoretical results assume $r$ is known but our proposed methods are applicable when this assumption fails to hold.}
Then \textsc{mb} estimates $\widehat{V}(\pi_e)$ as the average return of trajectories simulated in $\widehat{\mathcal{M}}$ while following $\pi_e$.
Algorithm \ref{alg:bootstrap} with the off-policy estimator \textsc{mb} as \textbf{Off-PolicyEstimate} defines \textsc{mb-bootstrap}.

If a model can capture the true MDP's dynamics or generalize well to unseen parts of the state-action space then \textsc{mb} estimates can have much lower variance than \textsc{is} estimates.
Thus we expect less variance in our $\widehat{V}_i$ estimates in Algorithm \ref{alg:bootstrap}.
However, models reduce variance at the cost of adding bias to the estimate.
Bias in the \textsc{mb} estimate of $V(\pi_e)$  arises from two sources:
\begin{enumerate}
\item When we lack data for a particular $(s,a)$ pair, we must make assumptions about how to estimate $P(\cdot|s,a)$.
\item If we use function approximation, we assume the model class from which we select $\widehat{P}$ includes the true transition model, $P$. When $P$ is outside the chosen model class then \textsc{mb}($\mathcal{D}$) can be biased because \textsc{mb}$(\mathcal{D}) - V(\pi_e) \rightarrow b$ for some constant $b\neq 0$ as $n \rightarrow \infty$.
\end{enumerate}

The first source of bias is dependent on what modeling assumptions are made.
Using assumptions which lead to more conservative estimates of $V(\pi_e)$ will, in practice, prevent \textsc{mb-bootstrap} from overestimating the lower bound. 
The second source of bias is more problematic since even as $n \rightarrow \infty$ the bootstrap model estimates will converge to a different value from $V(\pi_e)$.
In the next section we propose bootstrapping with the recently proposed weighted doubly-robust estimator in order to obtain data-efficient lower bounds in settings where model bias may be large.
Later we will present a new theoretical upper bound on model bias when $\widehat{P}$ is learned from a dataset of i.i.d.~trajectories.
This bound characterizes MDPs that are likely to produce high bias estimates.

\subsection{Weighted Doubly Robust Bootstrapping}

We also propose \textit{weighted doubly robust bootstrapping} (\textsc{wdr}-bootstrap) which combines bootstrapping with the recently proposed \textsc{wdr} off-policy estimator for settings where the \textsc{mb} estimator may exhibit high bias. 
The \textsc{wdr} estimator is based on per-decision weighted importance sampling (\textsc{pdwis}) but uses a model to reduce variance in the estimate.
The \textit{doubly robust} (\textsc{dr}) estimator has its origins in bandit problems \cite{dudik2011doubly} but was extended by Jiang and Li \shortcite{jiang2015doubly} to finite horizon MDPs.
Thomas and Brunskill \shortcite{thomas2016data-efficient} then extended \textsc{dr} to infinite horizon MDPs and combined it with weighted \textsc{is} weights to produce the weighted DR estimator.
Given a model and its state and state-action value functions for $\pi_e$, $\hat{v}_{\pi_e}$ and $\hat{q}_{\pi_e}$, the \textsc{wdr} estimator is defined as:
\begin{align*}
\scriptstyle
\text{\textsc{wdr}}(\mathcal{D}):= \text{\textsc{pdwis}}(\mathcal{D}) - \underbrace{\sum_{i=1}^n\sum_{t=0}^{L-1} \gamma^t (w_t^i\hat{q}_{\pi_e}(S_t^i,A_t^i) - w^i_{t-1}\hat{v}_{\pi_e}(S_t^i))}_\text{Control Variate Term}
\end{align*}
\normalsize
where $w_{-1} :=1$.
For \textsc{wdr}, the model value-functions are used as a control variate on the higher variance \textsc{pdwis} expectation.
The control variate term has expectation zero and thus \textsc{wdr} is an unbiased estimator of \textsc{pdwis} which is a statistically consistent estimator of $V(\pi_e)$.
Intuitively, \textsc{wdr} uses information from error in estimating the expected return under the model to lower the variance of the \textsc{pdwis} return.
We refer the reader to Thomas and Brunskill \cite{thomas2016data-efficient} and Jiang and Li \cite{jiang2015doubly} for an in-depth discussion of the \textsc{wdr} and \textsc{dr} estimators.
Since \textsc{wdr} estimates of $V(\pi_e)$ have been shown to achieve lower \textit{mean squared error} (MSE) than those of DR in several domains, we propose \textsc{wdr-bootstrap} which uses \textsc{wdr} as \textbf{Off-PolicyEstimate} in Algorithm \ref{alg:bootstrap}.

Although \textsc{wdr} is biased (since \textsc{pdwis} is biased), the statistical consistency property of \textsc{pdwis} ensures that the bootstrap estimates of \textsc{wdr-bootstrap} will converge to the correct estimate as $n$ increases.
Thus it is free of out-of-class model bias as $n \rightarrow \infty$.
Empirical results have shown that \textsc{wdr} can acheive lower MSE than \textsc{mb} in domains where the model converges to an incorrect model \cite{thomas2016data-efficient}.
However, Thomas and Brunskill also demonstrated situations where the \textsc{mb} evaluation is more efficient at acheiving low MSE than \textsc{wdr} when the variance of the \textsc{pdwis} weights is high.
We empirically analyze the trade-offs when using these estimators with bootstrapping for off-policy confidence bounds.

Note that \textsc{wdr-bootstrap} has three options for the model used to estimate the control variate: the model can  be provided (for instance a domain simulator), the model can be estimated with all of $\mathcal{D}$ and this model be used with \textsc{wdr} to compute each $\widehat{V}_i$, or we can build a new model for every bootstrap data set, $\mathcal{D}_i$, and use it to compute \textsc{wdr} for $\mathcal{D}_i$. 
In practice, an a priori model may be unavailable and it may be computationally expensive to build a model and find the value function for that model for each bootstrap data set. Thus, in our experiments, we estimate a single model with all trajectories in $\mathcal{D}$. We use the value functions of this single model to compute the \textsc{wdr} estimate for each $\mathcal{D}_i$.

\section{Trajectory Based Model Bias} \label{sec:bound}

We now present a theoretical upper bound on bias in the model-based estimate of $V(\pi_e)$.
Theorem 1 bounds the error of $\widehat{V}(\pi_e)$ produced by a model, $\widehat{M}$, as a function of the accuracy of $\widehat{M}$.
This bound provides insight into the settings in which \textsc{mb-bootstrap} is likely to be unsuccessful.
The bound is related to other model bias bounds in the literature and we discuss these in our survey of related work.
We defer the full derivation to Appendix A.
For this section we introduce the additional assumption that $L$ is finite.
All methods proposed in this paper are applicable to both finite and infinite horizon problems, however the bias upper bound is currently limited to the episodic finite horizon setting.

\begin{theorem}
For any policies, $\pi_e$ and $\pi_b$, let $p_{\pi_e}$ and $p_{\pi_b}$ be the distributions of trajectories induced by each policy. Then for an approximate model, $\widehat{M}$, with transition probabilities estimated from i.i.d.~trajectories $H \sim \pi_b$, the bias of $\widehat{V}(\pi_e)$ is upper bounded by:
\begin{multline*}
\left \vert \widehat{V}(\pi_e) - V(\pi_e) \right \vert \leq 2 L \cdot r_\mathtt{max}\sqrt{2 \mathbf{E}_{H \sim \pi_b} \left [ \rho_L^H \log \frac{p_{\pi_e}(H)}{\hat{p}_{\pi_e}(H)} \right ] }
\end{multline*}
where $\rho_L^H$ is the importance weight of trajectory $H$ at step $L$ and $\hat{p}_{\pi_e}$ is the distribution of trajectories induced by $\pi_e$ in $\widehat{M}$.
\end{theorem}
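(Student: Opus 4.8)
The plan is to view both $V(\pi_e)$ and the model-based estimate $\widehat V(\pi_e)$ as expectations of the \emph{same} return functional $g(\cdot)$ under two probability distributions on the common finite-horizon trajectory space: $V(\pi_e) = \mathbf{E}_{H\sim p_{\pi_e}}[g(H)]$, and $\widehat V(\pi_e) = \mathbf{E}_{H\sim \hat p_{\pi_e}}[g(H)]$, the latter being the (infinite-rollout) limit of the Monte-Carlo average of returns obtained by running $\pi_e$ inside $\widehat M$. The error then becomes a single difference of expectations under two measures,
\[
\widehat V(\pi_e) - V(\pi_e) \;=\; \sum_H \big(\hat p_{\pi_e}(H) - p_{\pi_e}(H)\big)\, g(H),
\]
which is the standard starting point for an integral-probability-metric style bound.

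First I would bound the return uniformly: since $r\in[0,r_\mathtt{max}]$, $\gamma\le 1$, and the horizon is $L$, we have $0\le g(H)\le L\, r_\mathtt{max}$ for every $H$. Pulling this out and recognising $\sum_H |\hat p_{\pi_e}(H)-p_{\pi_e}(H)|$ as twice the total variation distance gives
\[
\bigl|\widehat V(\pi_e) - V(\pi_e)\bigr| \;\le\; L\, r_\mathtt{max}\,\| p_{\pi_e} - \hat p_{\pi_e}\|_1 \;=\; 2 L\, r_\mathtt{max}\, \mathrm{TV}\!\left(p_{\pi_e},\hat p_{\pi_e}\right).
\]
I would then apply Pinsker's inequality, $\mathrm{TV}(p,q)\le\sqrt{\tfrac12\,\mathrm{KL}(p\,\|\,q)}$, with $p=p_{\pi_e}$, $q=\hat p_{\pi_e}$ — this is the direction that produces the $\log\frac{p_{\pi_e}}{\hat p_{\pi_e}}$ appearing in the statement, and it tacitly requires $\hat p_{\pi_e}\gg p_{\pi_e}$ (the model must assign positive probability to every trajectory $\pi_e$ can generate, else the bound is vacuous). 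A slightly looser accounting of the total-variation-to-KL step (or, equivalently, a per-step hybrid-model/simulation-lemma decomposition of the error) yields the constant appearing inside the radical.

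The final and most delicate step is the change of measure that converts the on-policy quantity $\mathrm{KL}(p_{\pi_e}\,\|\,\hat p_{\pi_e}) = \sum_H p_{\pi_e}(H)\log\frac{p_{\pi_e}(H)}{\hat p_{\pi_e}(H)}$ into the behaviour-policy expectation in the theorem. Because $p_{\pi_e}$ and $p_{\pi_b}$ share the \emph{true} $d_0$ and $P$ and differ only through the action probabilities, the likelihood ratio telescopes to the full-trajectory importance weight, $p_{\pi_e}(H)/p_{\pi_b}(H) = \prod_i \pi_e(A_i|S_i)/\pi_b(A_i|S_i) = \rho_L^H$; substituting $p_{\pi_e}(H) = \rho_L^H\, p_{\pi_b}(H)$ into the KL sum rewrites it exactly as $\mathbf{E}_{H\sim\pi_b}\!\left[\rho_L^H\log\frac{p_{\pi_e}(H)}{\hat p_{\pi_e}(H)}\right]$, giving the claimed inequality. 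The obstacles I anticipate are: (i) pinning down the precise constants, which likely forces the appendix to route through a stepwise argument rather than the one-line Pinsker argument above; (ii) clarifying that the object being bounded is the asymptotic (infinite-rollout) bias of \textsc{mb} under a fixed learned model, so that ``bias'' matches the paper's usage; and (iii) handling the support/absolute-continuity conditions that make the KL term — hence the $\log$-ratio under $\pi_b$ — well defined.
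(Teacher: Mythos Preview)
Your proposal is correct and follows essentially the same route as the paper: bound $g$ by $L\,r_\mathtt{max}$, convert the difference of expectations into total variation, apply Pinsker, and then use the importance-sampling identity $p_{\pi_e}(H)=\rho_L^H\,p_{\pi_b}(H)$ to rewrite the KL as an expectation under $\pi_b$. Regarding your concern (i), the paper does \emph{not} go through a stepwise simulation-lemma decomposition to get the stated constant; it simply invokes Pinsker in the loose form $D_{TV}(p\|q)\le\sqrt{2\,D_{KL}(p\|q)}$ (rather than the sharp $\sqrt{\tfrac12\,D_{KL}}$), which is exactly what produces the extra factor of $2$ under the radical.
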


The expectation is the importance-sampled \textit{Kullback-Leibler} (KL) divergence.
The KL-divergence is an information theoretic measure that is frequently used as a similarity measure between probability distributions.
This result tells us that the bias of \textsc{mb} depends on how different the distribution of trajectories under the model is from the distribution of trajectories seen when executing $\pi$ in the true MDP. 
Since most model building techniques (e.g., supervised learning algorithms, tabular methods) build the model from $(s_t,a_t,s_{t+1})$ transitions even if the transitions come from sampled trajectories (i.e., non i.i.d.~transitions), we express Theorem 1 in terms of transitions:

\begin{corollary}
For any policies $\pi_e$ and $\pi_b$ and an approximate model, $\widehat{M}$, with transition probabilities, $\widehat{P}$, estimated with trajectories $H \sim \pi_b$, the bias of the approximate model's estimate of $V(\pi_e)$, $\widehat{V}(\pi_e)$, is upper bounded by:

{
\small
\begin{align*}
|\widehat{V}(\pi_e) - V(\pi_e)| \leq  2\sqrt{2} L \cdot  r_{max}   \sqrt{\epsilon_0 + \sum_{t=1}^{L-1} \mathbf{E}_{S_t,A_t \sim d_{\pi_b}^t}[\rho_t^H \epsilon(S_t,A_t)] }
\end{align*}
}

\normalsize
where $d_{\pi_b}^t$ is the distribution of states and actions observed at time $t$ when executing $\pi_b$ in the true MDP, $\epsilon_0:= D_{KL}(d_0||\hat{d}_0)$, and $\epsilon(s,a) = D_{KL}(P(\cdot|s,a)||\widehat{P}(\cdot|s,a)))$.
\end{corollary}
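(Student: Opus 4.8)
The plan is to obtain the corollary from Theorem~1 by rewriting the trajectory-level Kullback--Leibler term appearing there in terms of the per-transition divergences $\epsilon_0$ and $\epsilon(\cdot,\cdot)$. Concretely, it suffices to establish
\[
\mathbf{E}_{H \sim \pi_b}\!\left[ \rho_L^H \log \frac{p_{\pi_e}(H)}{\hat{p}_{\pi_e}(H)} \right] \;=\; \epsilon_0 \;+\; \sum_{t=1}^{L-1} \mathbf{E}_{S_t,A_t \sim d_{\pi_b}^t}\!\left[ \rho_t^H\, \epsilon(S_t,A_t) \right],
\]
since substituting this into Theorem~1 and pulling the factor $2$ out of the square root gives exactly the stated bound with leading constant $2\sqrt{2}\,L\,r_\mathtt{max}$.

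To prove the identity, I would first expand the log-ratio using the factorizations of the two trajectory laws. Both $p_{\pi_e}$ and $\hat{p}_{\pi_e}$ factor over time into an initial-state term ($d_0$ versus $\hat d_0$), action terms (the identical $\pi_e(A_t|S_t)$ in each), and transition terms ($P(\cdot|S_t,A_t)$ versus $\widehat P(\cdot|S_t,A_t)$); the action terms cancel, leaving $\log\frac{p_{\pi_e}(H)}{\hat{p}_{\pi_e}(H)} = \log\frac{d_0(S_0)}{\hat d_0(S_0)} + \sum_t \log\frac{P(S_{t+1}|S_t,A_t)}{\widehat P(S_{t+1}|S_t,A_t)}$. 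This step is essentially the chain rule for KL divergence. I would then push the importance-weighted expectation through this sum term by term. For the initial-state term, $\rho_L^H$ has conditional mean one given $S_0$, so $\mathbf{E}_{H\sim\pi_b}\big[\rho_L^H\log\frac{d_0(S_0)}{\hat d_0(S_0)}\big] = \mathbf{E}_{S_0\sim d_0}\big[\log\frac{d_0(S_0)}{\hat d_0(S_0)}\big] = D_{KL}(d_0\|\hat d_0) = \epsilon_0$. For the transition term at step $t$, the ratio $\rho_L^H/\rho_t^H$ involves only the actions strictly after step $t$ and has conditional mean one given the prefix up through $S_{t+1}$, so $\rho_L^H$ may be replaced by $\rho_t^H$; conditioning further on $(S_0,A_0,\dots,S_t,A_t)$ and taking the inner expectation over $S_{t+1}\sim P(\cdot|S_t,A_t)$ turns the log-ratio into $D_{KL}\big(P(\cdot|S_t,A_t)\,\|\,\widehat P(\cdot|S_t,A_t)\big) = \epsilon(S_t,A_t)$, giving $\mathbf{E}_{H\sim\pi_b}[\rho_t^H\,\epsilon(S_t,A_t)]$, which is what the statement abbreviates by $\mathbf{E}_{S_t,A_t\sim d_{\pi_b}^t}[\rho_t^H\,\epsilon(S_t,A_t)]$. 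Summing these contributions, and relabeling the time index to match the range $t=1,\dots,L-1$ used in the statement, yields the identity and hence the corollary.

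The main difficulty is the measure-theoretic bookkeeping rather than any analytic inequality: one must track precisely which prefix each log-ratio term is a function of, verify the telescoping identity $\mathbf{E}\big[\rho_L^H/\rho_t^H \mid S_0,A_0,\dots,S_t,A_t,S_{t+1}\big] = 1$ by peeling off one action importance ratio at a time and using $\sum_a \pi_e(a|s) = 1$, and invoke the usual support conditions --- support of $\pi_e$ inside that of $\pi_b$, of $P$ inside that of $\widehat P$, and of $d_0$ inside that of $\hat d_0$ --- so that all importance ratios and divergences are well defined and finite. The argument as sketched actually produces equality in the displayed line, which implies the stated inequality a fortiori.
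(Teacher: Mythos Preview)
Your proposal is correct and follows essentially the same route as the paper: start from Theorem~1, factor the trajectory log-ratio into the initial-state term plus per-step transition terms, and marginalize to obtain $\epsilon_0 + \sum_t \mathbf{E}[\rho_t^H \epsilon(S_t,A_t)]$. You are in fact more explicit than the paper about the reduction from $\rho_L^H$ to $\rho_t^H$ via the telescoping conditional-mean-one argument, which the paper's appendix leaves implicit (its final displayed line still carries $\rho_L^H$).
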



Since $P$ is unknown it is impossible to estimate the $D_{KL}$ terms in Corollary 1.
However, $D_{KL}$ can be approximated with two common supervised learning loss functions: negative log likelihood and cross-entropy.
We can express Corollary 1 in terms of either negative log-likelihood (a regression loss function for continuous MDPs) or cross-entropy (a classification loss function for discrete MDPs) and minimize the bound with observed $(s_t,a_t,s_{t+1})$ transitions.
In the case of discrete state-spaces this approximation upper bounds $D_{KL}$.
In continuous state-spaces the approximation is correct within the average differential entropy of $P$ which is a problem-specific constant.
Both Theorem 1 and Corollary 1 can be extended to finite sample bounds using Hoeffding's inequality (see Appendix A).

Corollary 1 allows us to compute the upper bound proposed in Theorem 1.
However in practice the dependence on the maximum reward makes the bound too loose to subtract off from the lower bound found by \textsc{mb-bootstrap}.
Instead, we observe it characterizes settings where the \textsc{mb} estimator may exhibit high bias.
Specifically, a \textsc{mb} estimate of $V(\pi_e)$ will have low bias when we build a model which obtains low training error under the negative log-likelihood or cross-entropy loss functions where the error due to each $(s_t,a_t,s_{t+1})$ is importance-sampled to correct for the difference in distribution.
This result holds regardless of whether or not the true transition dynamics are representable by the model class.

\section{Empirical Results}

We now evaluate \textsc{mb-bootstrap} and \textsc{wdr-bootstrap} across two policy evaluation domains.


\subsection{Experimental Domains}



The first domain is the standard MountainCar task from the RL literature \cite{SuttonBarto98}.
In this domain an agent attempts to drive an underpowered car up a hill.
The car cannot drive straight up the hill and a successful policy must first move in reverse up another hill in order to gain momentum to reach its goal.
States are discretized horizontal position and velocity and the agent may choose to accelerate left, right, or neither.
At each time-step the reward is $-1$ except for in a terminal state when it is $0$.
We build models as done by Jiang and Li \cite{jiang2015doubly} where a lack of data for a $(s,a)$ pair causes a deterministic transition to $s$.
Also, as in previous work on importance sampling, we shorten the horizon of the problem by holding action $a_t$ constant for $4$ updates of the environment state \cite{jiang2015doubly,thomas2015safe}.
This modification changes the problem horizon to $L=100$ and is done to reduce the variance of importance-sampling.
Policy $\pi_b$ chooses actions uniformly at random and $\pi_e$ is a sub-optimal policy that solves the task in approximately $35$ steps.
In this domain we build tabular models which cannot generalize from observed $(s,a)$ pairs.
We compute the model action value function, $\hat{q}_{\pi_e}$, and state value function, $\hat{v}_{\pi_e}$ with value-iteration for \textsc{wdr}.
We use Monte Carlo rollouts to estimate $\widehat{V}$ with \textsc{mb}.

\begin{figure}
\caption{CliffWorld domain in which an agent (A) must move between or around cliffs to reach a goal (G).}\label{fig:cliffworld}
\includegraphics[scale=0.65,clip=true,trim=0mm 12mm 0mm 1mm]{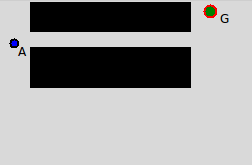}
\end{figure}

Our second domain is a continuous two-dimensional CliffWorld (depicted in Figure \ref{fig:cliffworld}) where a point mass agent navigates a series of cliffs to reach a goal, $g$.
An agent's state is a four dimensional vector of horizontal and vertical position and velocity.
Actions are acceleration values in the horizontal and vertical directions.
The reward is negative and proportional to the agent's distance to the goal and magnitude of the actions taken, $r(S_t,A_t) = ||S_t-g||_1 + ||A_t||_1$.
If the agent falls off a cliff it receives a large negative penalty.
In this domain, we hand code a deterministic policy, $\pi_d$.
Then the agent samples $\pi_e(\cdot|s)$ by sampling from $\mathcal{N}(a|\pi_d(s),\Sigma)$.
The behavior policy is the same except $\Sigma$ has greater variance.
Domain dynamics are linear with additive Gaussian noise.
We build models in two ways: linear regression (converges to true model as $n \rightarrow \infty$) and regression over nonlinear polynomial basis functions.\footnote{For each state feature, $x$, we include features $1$, $x^2$, $x^3$ but not $x$.}
The first model class choice represents the ideal case and the second is the case when the true dynamics are outside the learnable model class.
Our results refer to \textsc{mb-bootstrap}\textsuperscript{LR} and \textsc{mb-bootstrap}\textsuperscript{PR} as the \textsc{mb} estimator using linear regression and polynomial regression respectively.
These dynamics mean that the bootstrap models of MB-Bootstrap\textsuperscript{LR} and WDR-Bootstrap\textsuperscript{LR} will quickly converge to a correct model as the amount of data increases since they build models with linear regression.
On the other hand, these dynamics mean that the models of MB-Bootstrap\textsuperscript{PR} and WDR-Bootstrap\textsuperscript{PR} will quickly converge to an incorrect model since they use regression over nonlinear polynomial basis functions.
Similarly, we evaluate \textsc{wdr-bootstrap}\textsuperscript{LR} and \textsc{wdr-bootstrap}\textsuperscript{PR}.

In each domain, we estimate a 95\% confidence lower bound ($\delta=0.05$) with our proposed methods and the importance sampling BCa-bootstrap methods from Thomas et. al. \shortcite{thomas2015policy}.
To the best of our knowledge, these \textsc{is} methods are the current state-of-the-art for approximate high confidence off-policy evaluation.
We use $B=2000$ bootstrap estimates, $\widehat{V}_i$ and compute the true value of $V(\pi_e)$ with 1,000,000 Monte Carlo roll-outs of $\pi_e$ in each domain.

For each domain we computed the lower bound for $n$ trajectories where $n$ varied logarithmically.
For each $n$ we generate a set of $n$ trajectories $m$ times and compute the lower bound with each method (e.g., MB, WDR, IS) on that set of trajectories.
For Mountain Car $m=400$ and for CliffWorld $m=100$.
The large number of trials is required for the empirical error rate calculations.
When plotting the average lower bound across methods, we only average valid lower bounds (i.e., $\widehat{V}_{\delta}(\pi_e) \leq V(\pi_e)$) because invalid lower bounds raise the average which can make a method appear to produce a tighter average lower bound when it fact it has a higher error rate.

As in prior work  \cite{thomas2015off-policy}, we normalize returns for \textsc{is} and rewards for \textsc{pdis} to be between $[0,1]$.
Normalizing reduces the variance of the \textsc{is} estimator.
More importantly, it improves safety. Since the majority of the importance weights are close to zero, when the minimum return is zero, \textsc{is} tends to underestimate policy value.
Preliminary experiments showed without normalization, bootstrapping with \textsc{is} resulted in over confident bounds.
Thus, we normalize in all experiments.

\subsection{Empirical Results}

Figure \ref{fig:bound} displays the average empirical 95\% confidence lower bound found by each method in each domain.
The ideal result is a lower bound, $V_{\delta}(\pi_e)$, that is as large as possible subject to $V_{\delta}(\pi_e) < V(\pi_e)$.
Given that any statistically consistent method will achieve the ideal result as $n\rightarrow\infty$, our main point of comparison is which method gets closest the fastest.
As a general trend we note that our proposed methods---\textsc{mb-bootstrap} and \textsc{wdr-bootstrap}---get closer to this ideal result with less data than all other methods. 
Figure \ref{fig:error} displays the empirical error rate for \textsc{mb-bootstrap} and \textsc{wdr-bootstrap} and shows that they approximate the allowable 5\% error in each domain.

\begin{sloppypar}
In MountainCar (Figure \ref{fig:mc_bound}), both of our methods (\textsc{wdr-bootstrap} and \textsc{mb-bootstrap}) outperform purely \textsc{is} methods in reaching the ideal result.
We also note that both methods produce approximately the same average lower bound.
The modelling assumption that lack of data for some $(s,a)$ results in a transition to $s$ is a form of negative model bias which lowers the performance of \textsc{mb-bootstrap}.
Therefore, even though \textsc{mb} will eventually converge to $V(\pi_e)$ it does so no faster than \textsc{wdr} which can produce good estimates even when the model is inaccurate.
This negative bias also leads to \textsc{pdwis} producing a tighter bound for small data sets although it is overtaken by \textsc{mb-bootstrap} and \textsc{wdr-bootstrap} as the amount of data increases.
\end{sloppypar}

Figure \ref{fig:mc_error} shows that the \textsc{mb-bootstrap} and \textsc{wdr-bootstrap} error rate is much lower than the required error rate yet Figure \ref{fig:mc_bound} shows the lower bound is no looser.
Since \textsc{mb-bootstrap} and \textsc{wdr-bootstrap} are low variance estimators, the average bound can be tight with a low error rate.
It is also notable that since bootstrapping only approximates the 5\% allowable error rate all methods can do worse than 5\% when data is extremely sparse (only two trajectories).


\begin{figure}[h!]
\begin{subfigure}{\columnwidth} \centering
\includegraphics[scale=0.25]{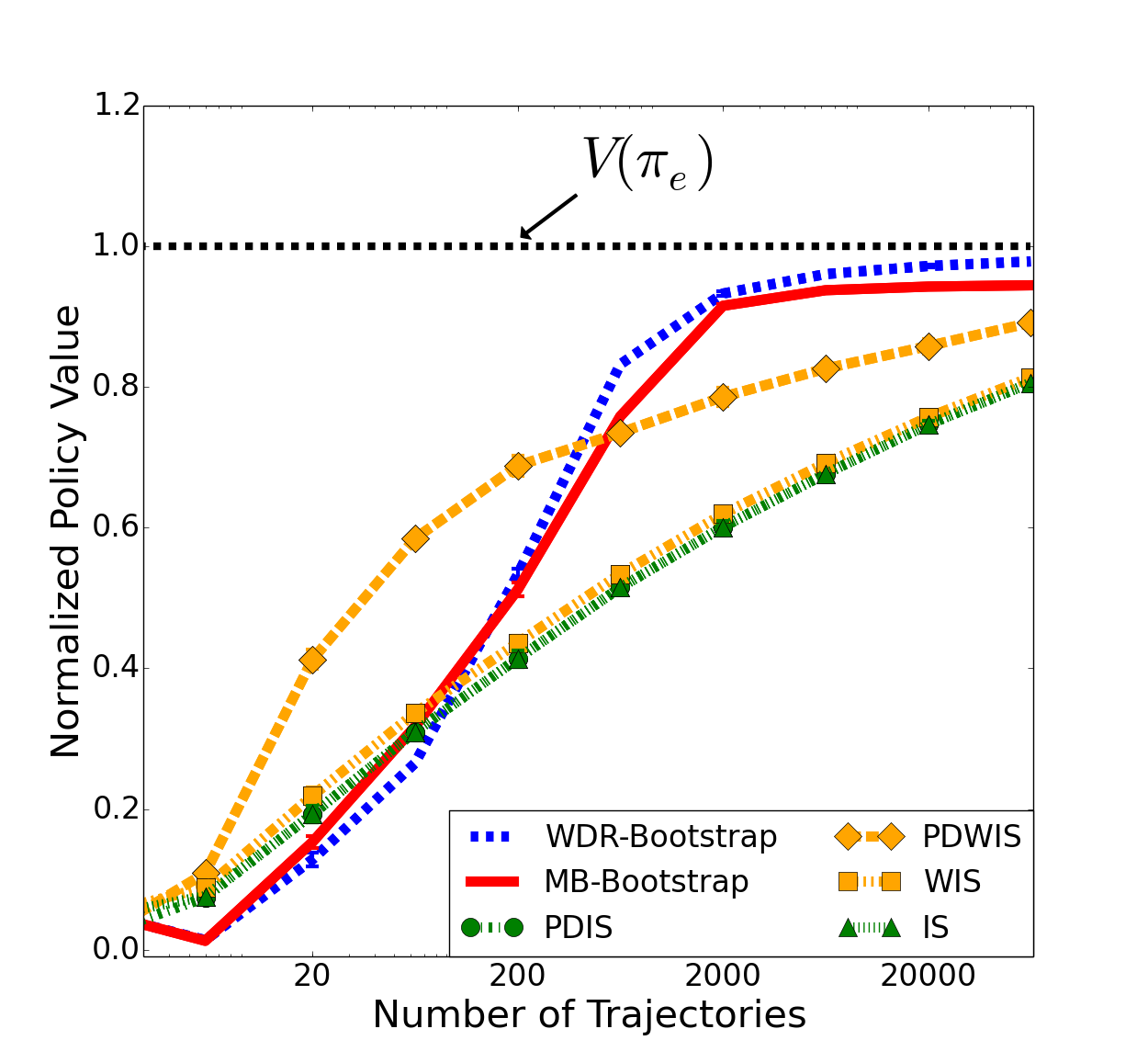}
\caption{\small{Mountain Car}}
\label{fig:mc_bound}
\end{subfigure}
\begin{subfigure}{\columnwidth} \centering
\includegraphics[scale=0.25]{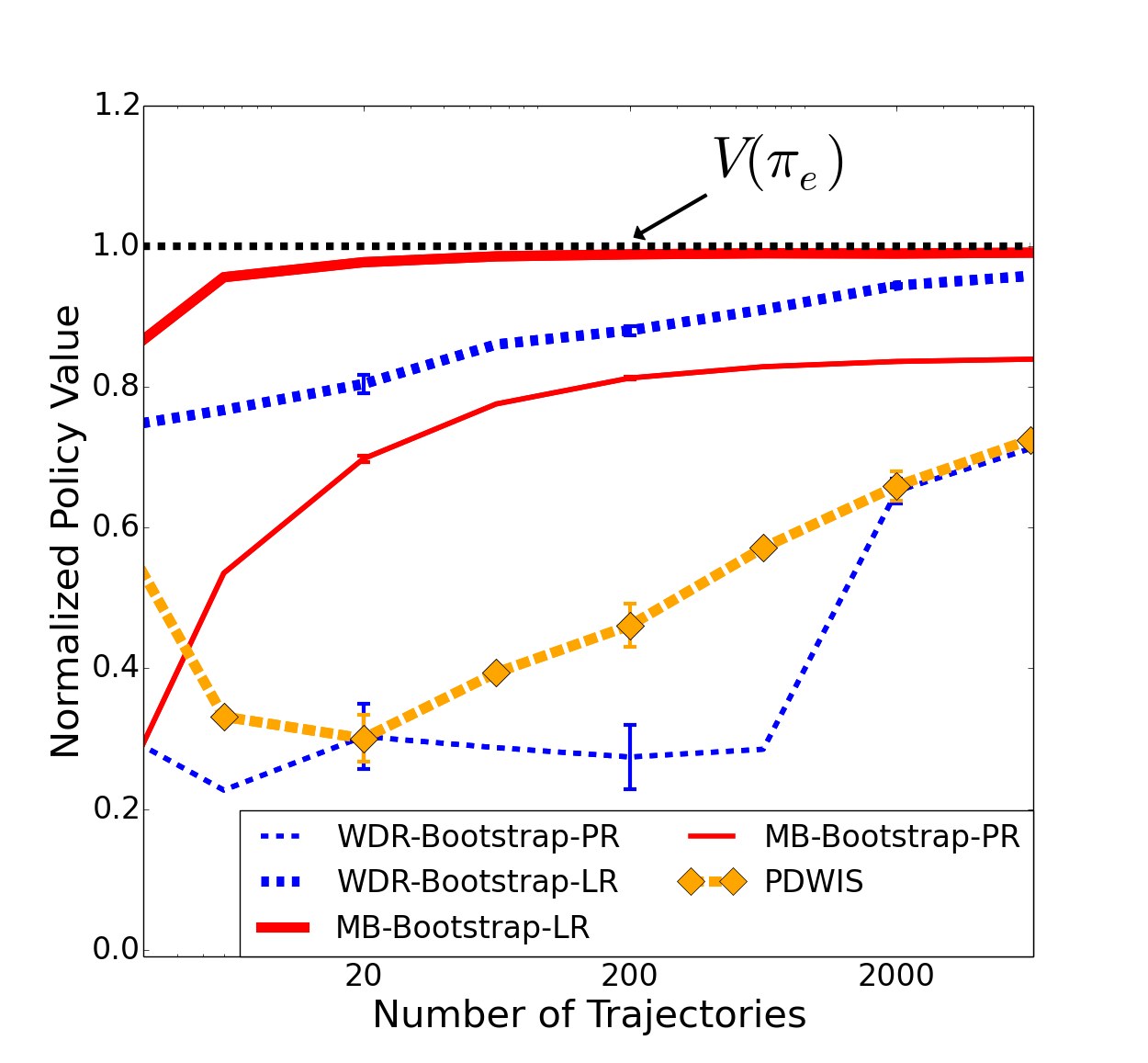}
\caption{\small{CliffWorld}}
\label{fig:cw_bound}
\end{subfigure}

\caption{\small{The average empirical lower bound for the Mountain Car and CliffWorld domains. Each plot displays the 95\% lower bound on $V(\pi_e)$ computed by each method with varying amounts of trajectories. The ideal lower bound is just below the line labelled $V(\pi_e)$. Results demonstrate that the proposed model-based bootstrapping (\textsc{mb-bootstrap}) and weighted doubly robust bootstrapping (\textsc{wdr-bootstrap}) find a tighter lower bound with less data than previous importance sampling bootstrapping methods. For clarity, we omit \textsc{is}, \textsc{wis} and \textsc{pdis} in CliffWorld as they were outperformed by \textsc{pdwis}. Error bars are for a 95\% two-sided confidence interval.}}
\label{fig:bound}
\end{figure}


In CliffWorld (Figure \ref{fig:cw_bound}), we first note that \textsc{mb-bootstrap}\textsuperscript{PR} quickly converges to a suboptimal lower bound.
In practice an incorrect model may lead to a bound that is too high (positive bias) or too loose (negative bias).
Here, \textsc{mb-bootstrap}\textsuperscript{PR} exhibits negative bias and we converge to a bound that is too loose.
More dangerous is positive bias which will make the method unsafe.
Our theoretical results suggest \textsc{mb} bias is high when evaluating $\pi_e$ since the polynomial basis function models have high training error when errors are importance-sampled to correct for the off-policy model estimation.
If we compute the bound in Section \ref{sec:bound} and subtract the value off from the bound estimated by \textsc{mb-bootstrap}\textsuperscript{PR} then the lower bound estimate will be unaffected by bias.
Unfortunately, our theoretical bound (and other model-bias bounds in earlier work) depends on the largest possible return, $L \cdot r_\mathtt{max}$ and thus removing bias in this straightforward way reduces data-efficiency gains when bias may in fact be much lower.


The second notable trend is that \textsc{wdr} is also negatively impacted by the incorrect model.
In Figure \ref{fig:cw_bound} we see that \textsc{wdr-bootstrap}\textsuperscript{LR} (correct model) starts at a tight bound and increases from there. 
\textsc{wdr-bootstrap}\textsuperscript{PR} with an incorrect model performs worse than \textsc{pdwis} until larger $n$. 
Using an incorrect model with \textsc{wdr} decreases the variance of the \textsc{pdwis} term less than the correct model would but we still expect less variance and a tighter lower bound than \textsc{pdwis} by itself.
One possibility is that error in the estimate of the model value functions coupled with the inaccurate model increases the variance of \textsc{wdr}.
This result motivates investigating the effect of inaccurate model state-value and state-action-value functions on the \textsc{wdr} control variate as these functions are certain to have error in any continuous setting.

\begin{figure}[h!]

\begin{subfigure}{\columnwidth} \centering
\includegraphics[scale=0.25]{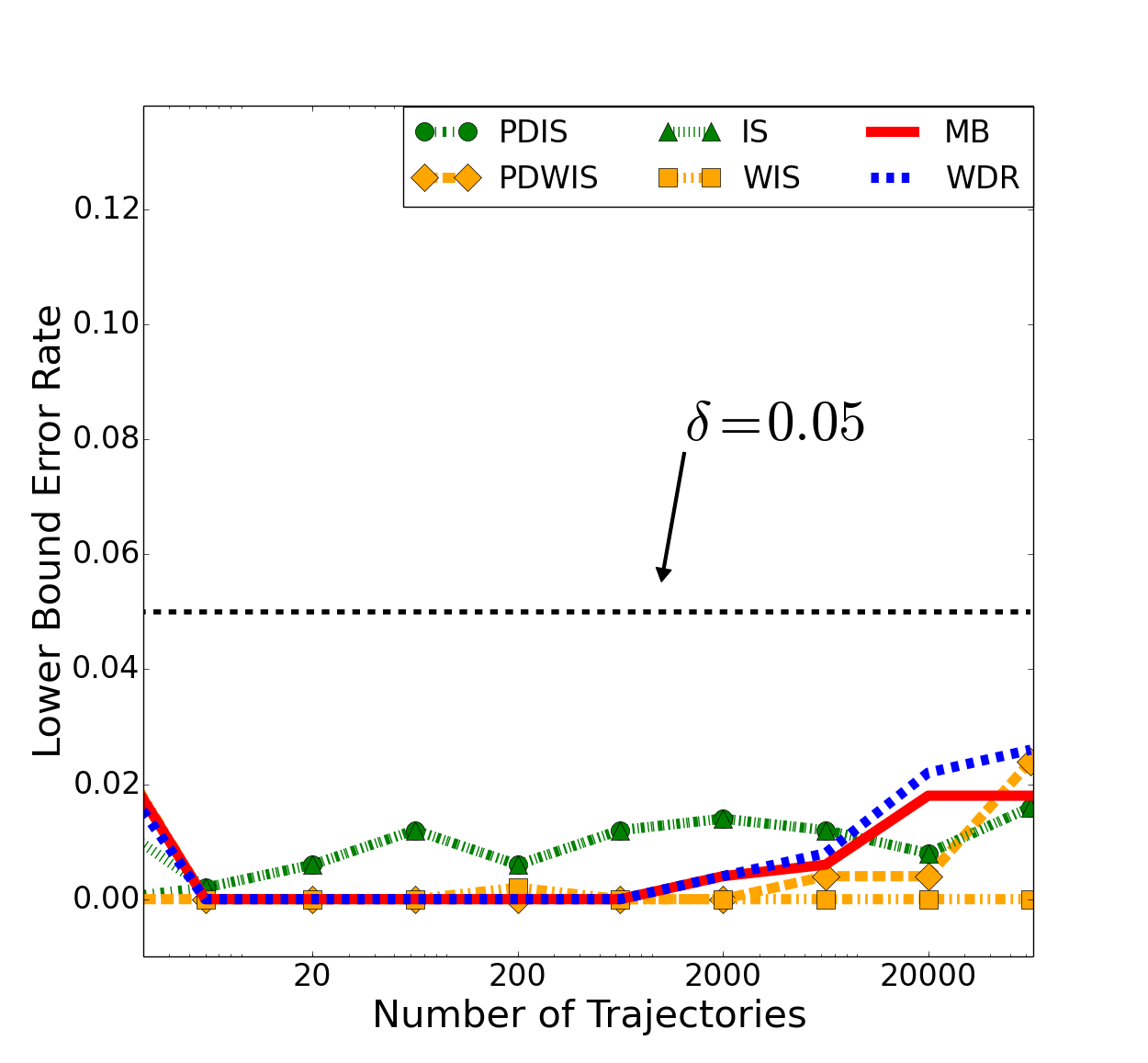}
\caption{\small{Mountain Car}}
\label{fig:mc_error}
\end{subfigure}
\begin{subfigure}{\columnwidth} \centering
\includegraphics[scale=0.25]{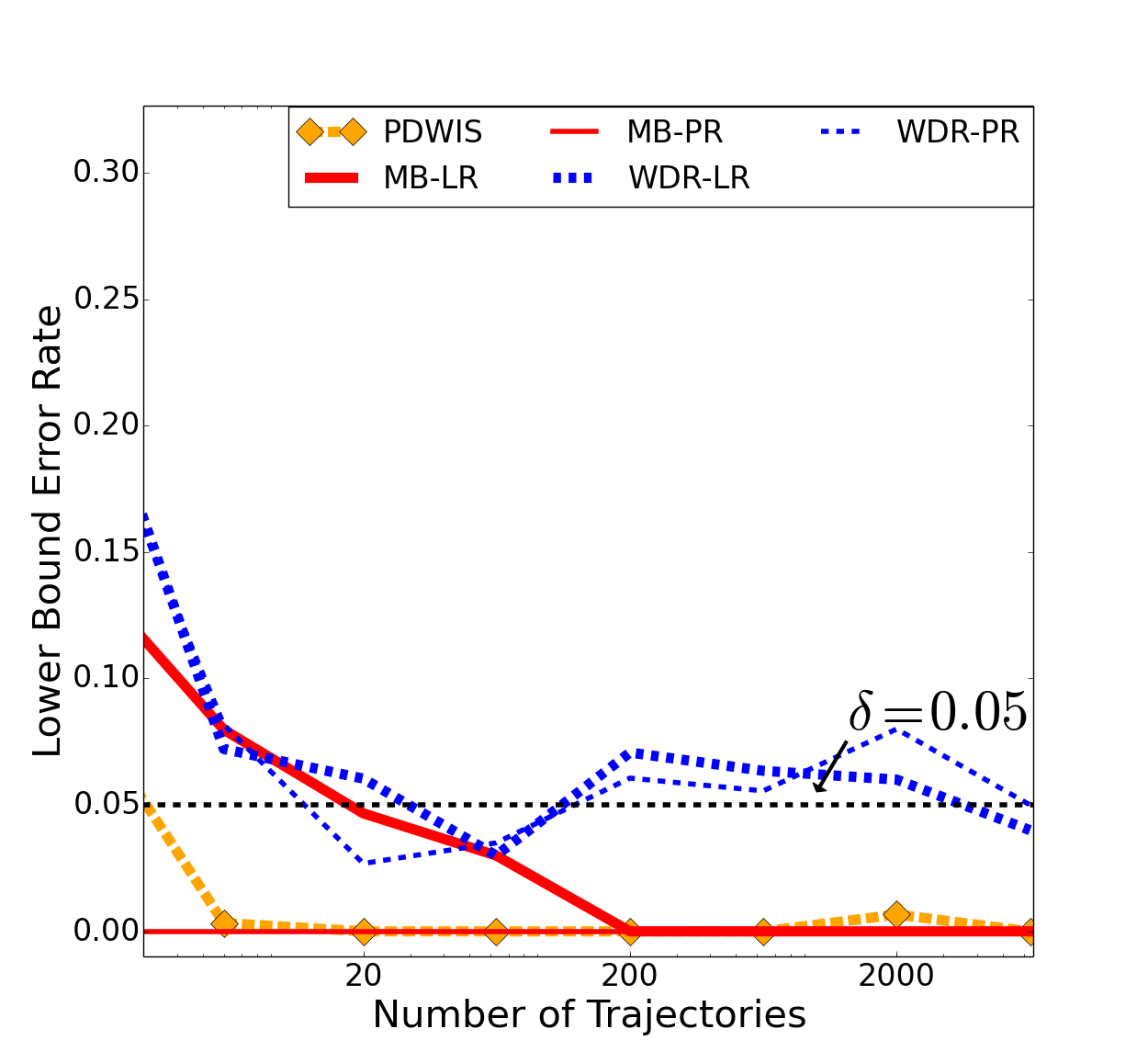}
\caption{\small{CliffWorld}}
\label{fig:cw_error}
\end{subfigure}
\caption{\small{Empirical error rate for the Mountain Car and CliffWorld domains. The lower bound is computed $m$ times for each method ($m=400$ for Mountain Car, $m=100$ for CliffWorld) and we count how many times the lower bound is above the true $V(\pi_e)$. All methods correctly approximate the allowable $5\%$ error rate for a 95\% confidence lower bound.}}
\label{fig:error}
\end{figure}

\section{Related Work}

Concentration inequalities have been used with \textsc{is} returns for lower bounds on off-policy estimates \cite{thomas2015off-policy}.
The concentration inequality approach is notable in that it produces a true probabilistic bound on the policy performance.
A similar but approximate method was proposed by Bottou et al \cite{bottou2013counterfactual}.
Unfortunately, these approachs requires prohibitive amounts of data and were shown to be far less data-efficient than bootstrapping with \textsc{is} \cite{thomas2015policy,thomas2015safe}.
Jiang and Li evaluated the DR estimator for safe-policy improvement \cite{jiang2015doubly}.
They compute confidence intervals with a method similar to the Student's $t$-Test confidence interval shown to be less data-efficient than bootstrapping \cite{thomas2015policy}.

Chow et al.\ \shortcite{chow2015robust} use ideas from robust optimization to derive a lower bound on $V(\pi_e)$ by first bounding model bias caused by error in a discrete model's transition function.
This bound is computable only if the error in each transition can be bounded and is inapplicable for estimating bias in continuous state-spaces.
Model-based PAC-MDP methods can be used to synthesize policies which are approximately optimal with high probability \cite{fu2014probably}.
These methods are only applicable to discrete MDPs and require large amounts of data.

Other bounds on the error in estimates of $V(\pi)$ with an inaccurate model have been introduced for discrete MDPs \cite{kearns2002near,strehl2009reinforcement}.
In contrast, we present a bound on model bias that is computable in both continuous and discrete MDPs.
Ross and Bagnell introduce a bound similar to Corollary 2 for model-based policy improvement but assume that the model is estimated from transitions sampled i.i.d.~from a given exploration distribution \cite{ross2012agnostic}.
Since we bootstrap over trajectories their bound is inapplicable to our setting.
Paduraru introduced tight model bias bounds for i.i.d.~sampled transitions from general MDPs and i.i.d.~trajectories from directed acyclic graph MDPs \cite{paduraru2013off}.
We made no assumptions on the structure of the MDP when deriving our bound.

Other previous work has used bootstrapping to handle uncertainty in RL.
The \textsc{texplore} algorithm learns multiple decision tree models from subsets of experience to represent uncertainty in model predictions \cite{hester2010real}.
White and White \cite{white2010interval} use time-series bootstrapping to place confidence intervals on value-function estimation during policy learning.
Thomas and Brunskill introduce an estimate of the model-based estimator's bias using a combination of \textsc{wdr} and bootstrapping \cite{thomas2016data-efficient}.
While these methods are related through the combination of bootstrapping and RL, none address the problem of confidence intervals for off-policy evaluation.

\section{Discussion}

We have proposed two bootstrapping methods that incorporate models to produce tight lower bounds on off-policy estimates.
We now describe their advantages and disadvantages and make recommendations about their use in practice.

\paragraph{Model-based Bootstrapping}

Clearly, \textsc{mb-bootstrap} is influenced by the quality of the estimated transition dynamics.
If \textsc{mb-bootstrap} can build models with low importance-sampled approximation error then we can expect it to be more data-efficient than other methods.
This data-efficiency comes at a cost of potential bias. 
Our theoretical results show that bias is unavoidable for some model-class choices.
However if the chosen model-class can be learned with low approximation error on $\mathcal{D}$ then model bias will be low.
In practice model prediction error for off-policy evaluation may be evaluated with a held out subset of $\mathcal{D}$ (i.e., model validation error).
If the model fails to generalize to unseen data then another off-policy method is preferable.
Importance-sampling the test error gives a measure of how well a model estimated with trajectories from $\pi_b$ will generalize for evaluating $\pi_e$.


\paragraph{Weighted Doubly-Robust Bootstrap}
Our proposed \textsc{wdr-bootstrap} method provides a low variance and low bias method of high confidence off-policy evaluation.
These two properties allow \textsc{wdr-bootstrap} to outperform different variants of \textsc{is} and sometimes perform as well or better than \textsc{mb-bootstrap}.
In contrast to \textsc{mb-bootstrap}, \textsc{wdr-bootstrap} achieves data-efficient lower bounds while remaining free of model bias.
Since \textsc{wdr-bootstrap} is free of model bias, it should be the preferred method if model quality is unknown or the domain is hard to model.

A disadvantage of \textsc{wdr-bootstrap} is that it requires the model's value functions be known for all states and state-action pairs that occur along trajectories in $\mathcal{D}$.
In continuous state and action spaces this requires either function approximation or Monte Carlo evaluation.
The variance of either method can increase the variance of the \textsc{wdr} estimate.
Note that \textsc{wdr} remains bias free provided $\hat{v}_{\pi_e}(s) = \mathbf{E}_{A \sim \pi_e(\cdot|s)}\left[\hat{q}_{\pi_e}(s,A)\right]$ which ensures the control variate term has expected value zero even if $\hat{q}_{\pi_e}$ is a biased estimate of the policy's action value function under the model.

A second limitation of \textsc{wdr} is that it biases $\widehat{V}(\pi_e)$ towards $V(\pi_b)$ when the trajectory dataset is small.
This bias is problematic for confidence bounds when $V(\pi_b) > V(\pi_e)$ as the lower bound on $V(\pi_e)$ will exhibit positive bias.
While the bias is a problem for general high confidence off-policy evaluation it is harmless in the specific case of high confidence off-policy improvement.
In this setting the purpose of the test is to decide if we are confident that the evaluation policy is better than the behavior policy.
If in fact $V(\pi_b) > V(\pi_e)$ the lower bound will still be less than $V(\pi_b)$ and an unsafe policy improvement step is avoided.
Similarly, if we know $V(\pi_b) > V(\pi_e)$ than \textsc{wdr-bootstrap} will likely have less variance than \textsc{is}-based methods and less bias than \textsc{mb-bootstrap}.


\paragraph{Importance Sampling Methods}
For general high confidence off-policy evaluation tasks in which model estimation error is high, \textsc{is}- or \textsc{pdis}-based bootstrapping provides the safest approximate high confidence off-policy evaluation method.
We have noted that normalizing returns and rewards is an important factor in using these methods safely.
Since most importance weights are close to zero the \textsc{is} estimate will be pulled towards zero which corresponds to underestimating value.
When safety is critical, underestimating is preferable to overestimating for a lower bound.
Our experiments used normalization as we found unnormalized returns have too high of variance to be used safely with bootstrapping.

Finally, in settings where safety must be strictly guaranteed, concentration inequalities with \textsc{is} have been shown to outperform other exact methods \cite{thomas2015off-policy}. If the data is available, then exact methods are preferred for their theoretical guarantees.

\paragraph{Special Cases}
Two special cases that occur in real world high confidence off-policy evaluation are deterministic policies and unknown $\pi_b$.
When $\pi_e$ is deterministic, one should use \textsc{mb-bootstrap} since the importance weights equal zero at any time step that $\pi_b$ chose action $a_t$ such that $\pi_e(A_t=a_t|S_t) = 0$.
Deterministic $\pi_b$ are problematic for any method since they produce trajectories which lack a variety of action selection data. 
We also note that importance-sampled training error for assessing model quality is inapplicable to this setting.
Unknown $\pi_b$ occur when we have domain trajectories but no knowledge of the policy that produced the trajectories.
For example, a medical domain could have data on treatments and outcomes but the doctor's treatment selection policy be unknown.
In this setting, importance sampling methods cannot be applied and \textsc{mb-bootstrap} may be the only way to provide a confidence interval on a new policy.
A current gap in the literature exists for these special cases with unbiased bounds in continuous settings.

\section{Conclusion and Future Work}

\begin{sloppypar}
We have introduced two straightforward yet novel methods---\textsc{mb-bootstrap} and \textsc{wdr-bootstrap}---that approximate confidence intervals for off-policy evaluation with bootstrapping and learned models.
Empirically, our methods yield superior data-efficiency and tighter lower bounds on the performance of the evaluation policy than state-of-the-art importance sampling based methods.
We also derived a new bound on the expected bias of \textsc{mb} when learning models that minimize error over a dataset of trajectories sampled \iid from an arbitrary policy.
Together, the empirical and theoretical results enhance our understanding of bootstrapping for off-policy confidence intervals and allow us to make recommendations on the settings where different methods are appropriate.
\end{sloppypar}

Our ongoing research agenda includes applying these techniques within robotics.
In robotics, off-policy challenges may arise from data scarcity, deterministic policies, or unknown behavior policies (e.g., demonstration data).
While these challenges suggest \textsc{mb-bootstrap} is appropriate, robots may exhibit complex, non-linear dynamics that are hard to model.
Understanding and finding solutions for high confidence off-policy evaluation across robotic tasks may inspire innovation that can be applied to other domains as well.

\section*{Acknowledegments}
We would like to thank Phil Thomas, Matthew Hausknecht, Daniel Brown, Stefano Albrecht, and Ajinkya Jain for useful discussions and insightful comments and Emma Brunskill and Yao Liu for pointing out an error in an earlier version of this work.
This work has taken place in the Personal Autonomous Robotics Lab (PeARL) and Learning Agents Research Group (LARG) at the Artificial Intelligence Laboratory, The University of Texas at Austin. PeARL research is supported in part by NSF (IIS-1638107, IIS-1617639). LARG research is supported in part by NSF (CNS-1330072, CNS-1305287, IIS-1637736, IIS-1651089), ONR (21C184-01), AFOSR (FA9550-14-1-0087), Raytheon, Toyota, AT\&T, and Lockheed Martin. Josiah Hanna is supported by an NSF Graduate Research Fellowship. Peter Stone serves on the Board of Directors of, Cogitai, Inc.  The terms of this arrangement have been reviewed and approved by the University of Texas at Austin in accordance with its policy on objectivity in research.

\begin{appendices}

\section{}

This appendix proves all theoretical results contained in the main text.
For convenience, proofs are given for discrete state and action sets.
Results hold for continuous states and actions by replacing summations over states and actions with integrals and changing probability mass functions to probability density functions.

\subsection{Model Bias when Evaluation and Behavior Policy are the Same}

\begin{lemma}
For any policy $\pi$, let $p_\pi$ be the distribution of trajectories generated by $\pi$ and $\hat{p}_\pi$ be the distribution of trajectories generated by $\pi$ in an approximate model, $\widehat{M}$. The bias of an estimate, $\widehat{V}(\pi)$, under $\widehat{M}$ is upper bounded by:
\[ \left \vert V(\pi) - \widehat{V}(\pi) \right \vert \leq 2\sqrt{2} L \cdot r_{max}\sqrt{D_{KL}(p_\pi||\hat{p}_\pi)}\]
where $D_{KL}(p_\pi||\hat{p}_\pi)$ is the Kullback-Leibler (KL) divergence between probability distributions $p_\pi$ and $\hat{p}_\pi$.
\end{lemma}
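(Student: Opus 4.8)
The plan is to exploit that $\widehat M = (\mathcal S,\mathcal A,\widehat P,r,\gamma,\hat d_0)$ shares the reward function $r$, discount $\gamma$, and horizon $L$ with the true MDP, so that $V(\pi)$ and $\widehat V(\pi)$ are expectations of the \emph{same} return function $g$ under two different trajectory distributions, $p_\pi$ and $\hat p_\pi$. The bias is then governed by the total variation distance between $p_\pi$ and $\hat p_\pi$, and one application of Pinsker's inequality converts total variation into the Kullback--Leibler divergence in the statement.

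Concretely, I would first write, using $V(\pi) = \mathbf{E}_{H\sim p_\pi}[g(H)]$ and $\widehat V(\pi) = \mathbf{E}_{H\sim \hat p_\pi}[g(H)]$,
\begin{align*}
\left\vert V(\pi) - \widehat V(\pi)\right\vert
&= \left\vert \mathbf{E}_{H\sim p_\pi}[g(H)] - \mathbf{E}_{H\sim \hat p_\pi}[g(H)]\right\vert
= \left\vert \sum_H \bigl(p_\pi(H) - \hat p_\pi(H)\bigr)\, g(H)\right\vert \\
&\le L\,r_{max} \sum_H \bigl\vert p_\pi(H) - \hat p_\pi(H)\bigr\vert
= 2\,L\,r_{max}\,\Vert p_\pi - \hat p_\pi\Vert_{\mathrm{TV}} ,
\end{align*}
where the inequality uses $0 \le g(H) = \sum_{t=0}^{L-1}\gamma^t r(S_t,A_t) \le L\,r_{max}$ for \emph{every} trajectory $H$ (because $r\in[0,r_{max}]$ and $\gamma\in[0,1]$), and the last equality is the identity $\sum_H\vert p_\pi(H)-\hat p_\pi(H)\vert = 2\Vert p_\pi-\hat p_\pi\Vert_{\mathrm{TV}}$. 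It then remains only to bound the total variation distance between the two trajectory distributions.

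Applying Pinsker's inequality, $\Vert p_\pi - \hat p_\pi\Vert_{\mathrm{TV}} \le \sqrt{\tfrac12 D_{KL}(p_\pi\Vert\hat p_\pi)}$, and collecting constants yields a bound of the form $c\,L\,r_{max}\sqrt{D_{KL}(p_\pi\Vert\hat p_\pi)}$; the constant $c = 2\sqrt 2$ in the statement is conservative --- the chain above in fact permits a somewhat smaller value --- but it certainly holds, which is all that is required. No individual step is hard; the points that need care are the opening reduction (that $\widehat M$ shares $r$, $\gamma$, $L$ with the true MDP, as reflected in the definition $\widehat M = (\mathcal S,\mathcal A,\widehat P,r,\gamma,\hat d_0)$, so the bias enters only through $\hat p_\pi$) and the observation that we must bound the \emph{return} uniformly by $L\,r_{max}$, not merely the per-step reward. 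Finally, I would note that this lemma is the building block for Theorem~1: applying it with $\pi = \pi_e$ bounds $\vert V(\pi_e)-\widehat V(\pi_e)\vert$ by $2\sqrt2\,L\,r_{max}\sqrt{D_{KL}(p_{\pi_e}\Vert\hat p_{\pi_e})}$, and a change of measure rewrites $D_{KL}(p_{\pi_e}\Vert\hat p_{\pi_e}) = \mathbf{E}_{H\sim p_{\pi_e}}[\log(p_{\pi_e}(H)/\hat p_{\pi_e}(H))] = \mathbf{E}_{H\sim\pi_b}[\rho_L^H\log(p_{\pi_e}(H)/\hat p_{\pi_e}(H))]$, recovering the importance-weighted KL form stated there.
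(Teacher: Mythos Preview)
Your proposal is correct and follows essentially the same route as the paper: bound the return uniformly by $L\,r_{max}$, convert the difference of expectations into $2\,L\,r_{max}\,D_{TV}(p_\pi,\hat p_\pi)$, and apply Pinsker's inequality. The paper invokes the looser form $D_{TV}\le\sqrt{2D_{KL}}$ (yielding exactly $2\sqrt{2}$), whereas you use the standard $D_{TV}\le\sqrt{D_{KL}/2}$ and correctly observe that the stated constant is conservative; otherwise the arguments coincide.
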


\begin{proof}

\[ \left \vert V(\pi) - \widehat{V}(\pi) \right \vert = \left |\displaystyle\sum_h p_\pi(h) g(h) - \displaystyle\sum_h \hat{p}_\pi(h) g(h)\right |\]


From Jensen's inequality and the fact that $g(h) \geq 0$:

\[  \left \vert V(\pi) - \widehat{V}(\pi) \right \vert \leq \displaystyle\sum_h \left |p_\pi(h) - \hat{p}_\pi(h) \right | g(h)\]



After replacing $g(h)$ with the maximum possible return, $g_\mathtt{max}:=L \cdot r_\mathtt{max}$, and factoring it out of the summation, we can use the definition of the total variation ($ D_{TV}(p||q) = \frac{1}{2}\displaystyle\sum_x \left |p(x) - q(x)\right |$) to obtain:

\[ \left \vert V(\pi) - \widehat{V}(\pi) \right \vert \leq 2D_{TV}(p_\pi||\hat{p}_\pi) \cdot  g_{max}\]



The definition of $g_\mathtt{max}$ and Pinsker's inequality ($D_{TV}(p||q) \leq \sqrt{2D_{KL}(p||q)}$) completes the proof.
\end{proof}

\subsection{Bounds in terms of behavior policy data}

\setcounter{theorem}{0}
\begin{theorem}
For any policies $\pi_e$ and $\pi_b$ let $p_{\pi_e}$ and $p_{\pi_b}$ be the distributions of trajectories induced by each policy. Then for an approximate model, $\widehat{M}$, estimated with \iid trajectories, $H \sim \pi_b$, the bias of the estimate of $V(\pi_e)$ with $\widehat{M}$, $\widehat{V}(\pi_e)$, is upper bounded by:

\[\left |\widehat{V}(\pi_e) - V(\pi_e)\right | \leq 2\sqrt{2} L \cdot r_{max}\sqrt{\mathbf{E}_{H \sim \pi_b}\left [\rho_L^H \log\frac{p_{\pi_e}(H)}{\hat{p}_{\pi_e}(H)} \right ]}\]

where $\rho_L^H$ is the importance weight of trajectory $H$ at step $L$ and $\hat{p}_{\pi_e}$ is the distribution of trajectories induced by $\pi_e$ in $\widehat{M}$.
\end{theorem}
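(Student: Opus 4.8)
The plan is to obtain Theorem~1 from Lemma~1 by a single importance-sampling change of measure. Applying Lemma~1 with $\pi = \pi_e$ immediately gives $\left|\widehat{V}(\pi_e) - V(\pi_e)\right| \leq 2\sqrt{2}\,L\cdot r_{max}\sqrt{D_{KL}(p_{\pi_e}\|\hat{p}_{\pi_e})}$, so the entire content of the theorem beyond Lemma~1 is the identity
\[ D_{KL}(p_{\pi_e}\|\hat{p}_{\pi_e}) = \mathbf{E}_{H\sim\pi_b}\!\left[\rho_L^H\log\frac{p_{\pi_e}(H)}{\hat{p}_{\pi_e}(H)}\right], \]
which re-expresses the on-policy divergence (which is not estimable from $\pi_b$ data) as an importance-weighted expectation over trajectories actually drawn from $\pi_b$. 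Substituting this identity into the Lemma~1 bound finishes the proof.

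To establish the identity I would start from the definition $D_{KL}(p_{\pi_e}\|\hat{p}_{\pi_e}) = \sum_h p_{\pi_e}(h)\log\frac{p_{\pi_e}(h)}{\hat{p}_{\pi_e}(h)}$ and insert the factor $p_{\pi_b}(h)/p_{\pi_b}(h)$ into each summand. The key structural fact is that $p_{\pi_e}$ and $p_{\pi_b}$ are both trajectory distributions in the \emph{true} MDP and share the same $d_0$ and the same transition kernel $P$; writing $p_\pi(h) = d_0(s_0)\prod_{t}\pi(a_t|s_t)P(s_{t+1}|s_t,a_t)$, every $d_0$ and $P$ factor cancels in the ratio and leaves $p_{\pi_e}(h)/p_{\pi_b}(h) = \prod_t \pi_e(a_t|s_t)/\pi_b(a_t|s_t) = \rho_L^h$, the full-trajectory importance weight. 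Hence $\sum_h p_{\pi_e}(h)\log\frac{p_{\pi_e}(h)}{\hat{p}_{\pi_e}(h)} = \sum_h p_{\pi_b}(h)\,\rho_L^h\log\frac{p_{\pi_e}(h)}{\hat{p}_{\pi_e}(h)}$, which is exactly the claimed expectation under $H\sim\pi_b$.

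The one point requiring care --- and essentially the only subtlety --- is the measure-theoretic bookkeeping of the two ratios involved: the change of measure is valid only if $p_{\pi_b}$ dominates $p_{\pi_e}$ (the same support condition on $\pi_e$ relative to $\pi_b$ already assumed for importance sampling), and $D_{KL}(p_{\pi_e}\|\hat{p}_{\pi_e})$ is finite only if $\hat{p}_{\pi_e}$ dominates $p_{\pi_e}$; on the corresponding null sets the summands vanish under the usual $0\log 0 = 0$ convention, so no probability mass is dropped. I would record these as standing assumptions. Note that the expectation-form bound holds for any fixed model $\widehat{M}$; the i.i.d.\ structure of the trajectories is what lets one pass to a finite-sample bound by replacing $\mathbf{E}_{H\sim\pi_b}[\cdot]$ with the empirical average over the sampled trajectories and applying Hoeffding's inequality to the bounded variable $\rho_L^H\log\frac{p_{\pi_e}(H)}{\hat{p}_{\pi_e}(H)}$. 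Corollary~1 then follows by the same device applied term-by-term after decomposing $\log\frac{p_{\pi_e}(H)}{\hat{p}_{\pi_e}(H)} = \log\frac{d_0(S_0)}{\hat{d}_0(S_0)} + \sum_t \log\frac{P(S_{t+1}|S_t,A_t)}{\widehat{P}(S_{t+1}|S_t,A_t)}$ (the $\pi_e$ factors cancel here too) and using that the importance weight for the suffix after step $t$ has conditional expectation one under $\pi_b$, which collapses each term to $\mathbf{E}[\rho_t^H\,\epsilon(S_t,A_t)]$.
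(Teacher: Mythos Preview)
Your proposal is correct and follows essentially the same route as the paper: apply Lemma~1 with $\pi=\pi_e$, then rewrite $D_{KL}(p_{\pi_e}\|\hat{p}_{\pi_e})$ as an expectation under $\pi_b$ via the importance-sampling identity, observing that the $d_0$ and $P$ factors cancel in $p_{\pi_e}(h)/p_{\pi_b}(h)$ to leave $\rho_L^h$. Your treatment is in fact more detailed than the paper's (which dispatches the proof in two sentences), and your remarks on support conditions and the passage to Corollary~1 and the finite-sample bound are accurate anticipations of what the paper does next.
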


\begin{proof}
Theorem 1 follows from Lemma 1 with the importance-sampling identity (i.e., importance-sampling the expectation in Lemma 1 so that it is an expectation with $H \sim \pi_b$).
The transition probabilities cancel in the importance weight, $\frac{p_{\pi_e}(H)}{p_{\pi_b}(H)}$, leaving us with $\rho_L^H$ and completing the proof.
\end{proof}

\subsection{Bounding Theorem 1 in terms of a Supervised Loss Function}

We now express Theorem 1 in terms of an expectation over transitions that occur along sampled trajectories.

\setcounter{corollary}{0}
\begin{corollary}
For any policies $\pi_e$ and $\pi_b$ and an approximate model, $\widehat{M}$, with transition probabilities, $\widehat{P}$, estimated with trajectories $H \sim \pi_b$, the bias of the approximate model's estimate of $V(\pi_e)$, $\widehat{V}(\pi_e)$, is upper bounded by:
\small
\begin{align*}
|\widehat{V}(\pi_e) - V(\pi_e)| \leq  2\sqrt{2} L \cdot  r_{max}\sqrt{\epsilon_0 + \sum_{t=1}^{L-1} \mathbf{E}_{S_t,A_t \sim d_{\pi_b}^t}[\rho_t^H \epsilon(S_t,A_t)]}
\end{align*}
\normalsize
where $d_{\pi_b}^t$ is the distribution of states and actions observed at time $t$ when executing $\pi_b$ in the true MDP, $\epsilon_0:= D_{KL}(d_0||\hat{d}_0)$, and $\epsilon(s,a) = D_{KL}(P(\cdot|s,a)||\widehat{P}(\cdot|s,a)))$.
\end{corollary}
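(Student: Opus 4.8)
The plan is to reduce the corollary to decomposing a single KL divergence and then importance-sampling each piece of it. Applying Lemma~1 with $\pi = \pi_e$ gives
\[ \left|\widehat{V}(\pi_e) - V(\pi_e)\right| \leq 2\sqrt{2}\,L\cdot r_{max}\sqrt{D_{KL}(p_{\pi_e}\|\hat{p}_{\pi_e})}, \]
so it is enough to establish the identity $D_{KL}(p_{\pi_e}\|\hat{p}_{\pi_e}) = \epsilon_0 + \sum_{t=1}^{L-1}\mathbf{E}_{S_t,A_t\sim d_{\pi_b}^t}[\rho_t^H\,\epsilon(S_t,A_t)]$ and substitute it into this bound. (Equality in fact holds; the corollary is stated as an inequality only because Lemma~1 already supplies one and because in practice each $\epsilon(S_t,A_t)$ is subsequently replaced by its negative-log-likelihood or cross-entropy surrogate.)

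First I would factor both trajectory distributions over $H = S_1,A_1,\dots,S_L,A_L$, writing $p_{\pi_e}(H) = d_0(S_1)\prod_{t=1}^{L}\pi_e(A_t|S_t)\prod_{t=1}^{L-1}P(S_{t+1}|S_t,A_t)$ and $\hat{p}_{\pi_e}(H) = \hat{d}_0(S_1)\prod_{t=1}^{L}\pi_e(A_t|S_t)\prod_{t=1}^{L-1}\widehat{P}(S_{t+1}|S_t,A_t)$, which differ only in the initial-state term and the transition kernel. Applying the chain rule for relative entropy along the ordered variables $S_1,A_1,S_2,A_2,\dots$, every conditional term for an action variable equals $D_{KL}(\pi_e(\cdot|S_t)\|\pi_e(\cdot|S_t)) = 0$ and drops out; the initial-state term contributes $D_{KL}(d_0\|\hat{d}_0) = \epsilon_0$; and the conditional term for $S_{t+1}$ given the prefix contributes $\mathbf{E}_{S_t,A_t\sim d_{\pi_e}^t}[D_{KL}(P(\cdot|S_t,A_t)\|\widehat{P}(\cdot|S_t,A_t))] = \mathbf{E}_{S_t,A_t\sim d_{\pi_e}^t}[\epsilon(S_t,A_t)]$, where the outer expectation is under the true MDP (the chain rule always conditions on $p_{\pi_e}$) and the marginal of $(S_t,A_t)$ under $p_{\pi_e}$ is by definition $d_{\pi_e}^t$. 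Summing yields $D_{KL}(p_{\pi_e}\|\hat{p}_{\pi_e}) = \epsilon_0 + \sum_{t=1}^{L-1}\mathbf{E}_{S_t,A_t\sim d_{\pi_e}^t}[\epsilon(S_t,A_t)]$.

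It then remains to convert each on-policy transition term into a $\pi_b$-expectation. The distribution of the length-$t$ prefix $S_1,A_1,\dots,S_t,A_t$ under $\pi_e$ has density ratio $\prod_{i=1}^{t}\frac{\pi_e(A_i|S_i)}{\pi_b(A_i|S_i)} = \rho_t^H$ relative to its distribution under $\pi_b$, since the $d_0$ and transition factors are identical under the two policies and cancel. Hence $\mathbf{E}_{S_t,A_t\sim d_{\pi_e}^t}[\epsilon(S_t,A_t)] = \mathbf{E}_{H\sim\pi_b}[\rho_t^H\,\epsilon(S_t,A_t)] = \mathbf{E}_{S_t,A_t\sim d_{\pi_b}^t}[\rho_t^H\,\epsilon(S_t,A_t)]$, and plugging this back into the displayed bound completes the argument.

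The step I expect to be the main obstacle is the bookkeeping in the chain-rule decomposition: one must keep every outer expectation under the true trajectory distribution $p_{\pi_e}$ rather than under the model (whose trajectory distribution has already diverged from $p_{\pi_e}$ after the first transition), confirm that the shared $\pi_e$ factors genuinely cancel so that only the initial-state and per-transition KL terms survive, and check that the index ranges match the horizon $L$. A secondary technicality is that every KL term must be finite, i.e.\ $\hat{d}_0$ and $\widehat{P}(\cdot|s,a)$ must dominate $d_0$ and $P(\cdot|s,a)$ on the visited support, as otherwise the bound is trivially infinite; and, as noted after the corollary, passing from $\epsilon(s,a)$ to a log-likelihood surrogate is exact only up to the problem-specific average differential entropy of $P$ in continuous state spaces.
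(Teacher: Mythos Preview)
Your proposal is correct and follows essentially the same route as the paper: both arguments decompose the trajectory-level KL $D_{KL}(p_{\pi_e}\|\hat{p}_{\pi_e})$ into $\epsilon_0$ plus per-transition KL terms and convert to $\pi_b$-expectations via importance weighting. The only cosmetic difference is that the paper passes through Theorem~1 first (importance-sampling the whole trajectory KL, then expanding the log of the product), whereas you invoke Lemma~1 directly and apply the chain rule for relative entropy before importance-sampling each term---your ordering is arguably cleaner and makes transparent why the per-step weight is $\rho_t^H$ rather than $\rho_L^H$.
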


Corollary 1 follows from Theorem 1 by equating the expectation to an expectation in terms of $(S_t,A_t,S_{t+1})$ samples:

\begin{proof}

\[ \mathbf{E}_{H \sim \pi_b} \left [\rho_L^H \log \frac{p_{\pi_e}(H)}{\hat{p}_{\pi_e}(H)} \right ] = \displaystyle\sum_h p_{\pi}(h) \log \frac{p_{\pi}(h)}{\hat{p}_{\hat{\pi}}(h)}\]

\begin{multline*}
= \displaystyle\sum_{s_0} \displaystyle\sum_{a_0} \cdot \cdot \cdot \displaystyle\sum_{s_{L-1}} \displaystyle\sum_{a_{L-1}} d_0(s_0)\pi_b(a_0|s_0) \cdot \cdot \cdot P(s_{L-1}|s_{L-2},a_{L-2}) \cdot \\ \pi_b(a_{L-1}|s_{L-1})  \rho_L^H \log \frac{p(s_0) \cdot \cdot \cdot P(s_{L-1}|s_{L-2},a_{L-2}) }{\hat{p}(s_0) \cdot \cdot \cdot \widehat{P}(s_{L-1}|s_{L-2},a_{L-2}) }
\end{multline*}

Using the logarithm property that $\log(ab) = \log(a) + \log(b)$ and rearranging the summation allows us to marginalize the probabilities that do not appear in the logarithm.

\begin{gather*}
 = \displaystyle\sum_{s_0} d_0(s_0) \log \frac{d_0(s_0)}{\hat{d}_0(s_0)} \\ + \displaystyle\sum_{t=1}^{L-1} \displaystyle\sum_{s_0}d_0(s_0) \cdot \cdot \displaystyle\sum_{s_t} \rho_L^H P(s_t|s_{t-1},a_{t-1}) \log\frac{P(s_t|s_{t-1},a_{t-1})}{\widehat{P}(s_t|s_{t-1},a_{t-1})} 
\end{gather*}
\begin{sloppypar}
Define the probability of observing $s$ and $a$ at time $t+1$ when following $\pi_b$ recursively as 
$d_{\pi_b}^{t+1}(s,a):= \displaystyle\sum_{s_t,a_t} d_{\pi_b}^t(s_t,a_t) P(s|s_t,a_t)\pi_b(a|s)$ where $d_{\pi_b}^1(s,a):=d_0(s)\pi_b(a|s)$. Using this definition to simplify:
\end{sloppypar}
\begin{gather*}
 = D_{KL}(d_0||\hat{d}_0) + \displaystyle\sum_{t=1}^{L-1} \mathbf{E}_{S,A \sim d^t_{\pi_b}}\left [\rho_L^H D_{KL}(P(\cdot|S,A)||\widehat{P}(\cdot|S,A)) \right] 
\end{gather*}
\end{proof}

\begin{sloppypar}
We relate $D_{KL}$ to two common supervised learning loss functions so that we can minimize Corollary 1 with $(S_t,A_t,S_{t+1})$ samples.
$D_{KL}(P||\widehat{P}) = H[P,\widehat{P}] - H[P]$ where $H[P]$ and $H[P,\widehat{P}]$ are entropy and cross-entropy respectively.
For discrete distributions, $H[P,\widehat{P}] - H[P] \leq H[P,\widehat{P}]$ since entropy is always positive.
This fact allows us to upper bound $D_{KL}$ with the cross-entropy loss function.
The cross-entropy loss function is equivalent to the expected negative log likelihood loss function:
$H(P(\cdot|s,a),\widehat{P}(\cdot|s,a))) = \mathbf{E}_{S' \sim P(\cdot|s,a)}[-\log \widehat{P}(S'|s,a)] = \mathbf{E}_{S' \sim P(\cdot|s,a)}[\operatorname{nlh}(\widehat{P},s,a,S')]$ where $\operatorname{nlh}(P,s,a,s') := -\log(P(s'|s,a))$. 
Thus our bound applies to maximum likelihood model learning.
For continuous domains where the transition function is a probability density function, entropy can be negative so the negative log-likelihood or cross-entropy loss functions will not always bound model bias.
In this case, our bound approximates the true bias bound to within a constant.
\end{sloppypar}

\subsection{Finite Sample Bounds}

Theorem 1 can be expressed as a finite-sample bound by applying Hoeffding's inequality to bound the expectation in the bound.


\begin{corollary}
For any policies $\pi_e$ and $\pi_b$ and an approximate model, $\widehat{M}$, with transition probabilities, $\widehat{P}$, estimated with transitions, $(s,a)$, from trajectories $H \sim \pi_b$, and after observing $m$ trajectories then with probability $\alpha$, the bias of the approximate model's estimate of $V(\pi_e)$, $\widehat{V}(\pi_e)$, is upper bounded by:
\small{
\begin{gather*}
\left |\widehat{V}(\pi_e) - V(\pi_e) \right | \leq  2 L \cdot  r_{max} \cdot \\ \sqrt{ 2\bar{\rho}_L\sqrt{\frac{\ln(\frac{1}{\alpha})}{2m}} - \frac{1}{m} \sum_{h\in \mathcal{D}} \rho_L^h \left ( \log \hat{d}_0(s_1) + \sum_{t=1}^{L-1} \log \widehat{P}(s_{t+1}|s_t,a_t)\right )}
\end{gather*}
}\normalsize
where $\bar{\rho}_L$ is an upper bound on the importance ratio, i.e., for all $\rho_L^h$, $\rho_L^h < \bar{\rho}_L$.
\end{corollary}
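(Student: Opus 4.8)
The plan is to convert the population expectation in Theorem~1 into something computable from $\mathcal{D}$ and then replace it by its empirical average via a Hoeffding bound. First I would invoke Theorem~1, which reduces the goal to controlling $\mathbf{E}_{H\sim\pi_b}\!\big[\rho_L^H\log\tfrac{p_{\pi_e}(H)}{\hat{p}_{\pi_e}(H)}\big]$. Since this still involves the unknown $P$ through $p_{\pi_e}$, the next step reuses the computation in the proof of Corollary~1: expand the trajectory log-ratio into the telescoping sum $D_{KL}(d_0\|\hat{d}_0) + \sum_{t=1}^{L-1}\mathbf{E}[\rho_L^H D_{KL}(P(\cdot|S_t,A_t)\|\widehat{P}(\cdot|S_t,A_t))]$, and then bound each $D_{KL}(P\|\widehat{P})$ by the cross-entropy $H[P,\widehat{P}] = \mathbf{E}_{S'\sim P(\cdot|s,a)}[-\log\widehat{P}(S'|s,a)]$, discarding $H[P]\ge 0$ (nonnegative in the discrete case; correct up to the average differential entropy of $P$ in the continuous case, exactly as flagged after Corollary~1). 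Folding the per-step cross-entropies back into a single trajectory expectation shows the quantity under the square root in Theorem~1 is at most $\mathbf{E}_{H\sim\pi_b}[\rho_L^H\,Y(H)]$ with $Y(H):=-\log\hat{d}_0(S_1)-\sum_{t=1}^{L-1}\log\widehat{P}(S_{t+1}|S_t,A_t)$, i.e.\ minus the model's trajectory log-likelihood; for $\pi_e=\pi_b$ the weights are identically one, which is the form appearing in the statement.

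Second, since the $m$ trajectories in $\mathcal{D}$ are i.i.d.\ under $\pi_b$, the random variable $Y(H)$ (or $\rho_L^H Y(H)$ in general) has an unbiased empirical mean $\tfrac1m\sum_{h\in\mathcal{D}}Y(h)$. Assuming $Y$ has bounded range $Y_{\max}$ --- equivalently $\hat{d}_0$ and $\widehat{P}$ are bounded below along observed transitions, which holds automatically for the tabular / ``transition-to-self'' constructions used in the experiments --- Hoeffding's inequality gives, with probability at least $\alpha$, $\mathbf{E}_{H\sim\pi_b}[Y(H)] \le \tfrac1m\sum_{h\in\mathcal{D}}Y(h) + Y_{\max}\sqrt{\tfrac{\ln(1/\alpha)}{2m}}$. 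Substituting this into the chain above, noting $\tfrac1m\sum_h Y(h) = -\tfrac1m\sum_{h\in\mathcal{D}}\!\big(\log\hat{d}_0(s_1)+\sum_{t=1}^{L-1}\log\widehat{P}(s_{t+1}|s_t,a_t)\big)$, and keeping $g_{\max}=L\cdot r_{\max}$ together with Pinsker's inequality exactly as in Lemma~1, yields the stated bound; normalizing $Y$ to unit range (as the experiments already normalize returns) is what produces the specific constants inside the square root of the display.

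I expect the boundedness hypothesis for Hoeffding to be the main obstacle: $Y(H)$, and in the genuinely off-policy case the product $\rho_L^H Y(H)$, is unbounded for arbitrary model classes in continuous state spaces, since a single badly-modeled transition sends $-\log\widehat{P}$ to $+\infty$ and large importance weights compound it. A clean statement therefore needs either a floor on predicted probabilities, clipped/normalized log-likelihoods, a bound on the importance weights, or a concentration inequality for sub-exponential variables in place of Hoeffding; I would present the result under a bounded-range assumption and point to normalization as the route to the displayed constants. The remaining steps --- the telescoping KL expansion, the cross-entropy upper bound, and the algebra collapsing everything under one square root --- are routine once Theorem~1 and the proof of Corollary~1 are in hand.
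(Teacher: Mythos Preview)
Your proposal is correct and follows essentially the same route as the paper: start from Theorem~1, expand the trajectory KL as in Corollary~1, drop the entropy terms involving the unknown $d_0$ and $P$ (the cross-entropy/negative log-likelihood step), and apply Hoeffding's inequality to pass from the population mean to the empirical average over $\mathcal{D}$. The only cosmetic difference is ordering---the paper applies Hoeffding first and then expands, you expand first---and you are more careful than the paper in flagging the bounded-range hypothesis Hoeffding needs and the disappearance of the importance weights in the displayed bound.
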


\begin{proof}
Corollary 2 follows from applying Hoeffding's Inequality to Theorem 1 and then expanding $D_\mathtt{KL}(p||\hat{p})$ to be in terms of samples as done in the derivation of Corollary 1.
We then drop logarithm terms which contain the unknown $d_0$ and $P$ functions.
Dropping these terms is equivalent to expressing Corollary 2 in terms of the cross-entropy or negative log-likelihood loss functions.
\end{proof}

\end{appendices}

\bibliographystyle{plain}

\begin{thebibliography}{10}

\bibitem{bottou2013counterfactual}
L{\'e}on Bottou, Jonas Peters, Joaquin~Quinonero Candela, Denis~Xavier Charles,
  Max Chickering, Elon Portugaly, Dipankar Ray, Patrice~Y Simard, and
  Ed~Snelson.
\newblock Counterfactual reasoning and learning systems: the example of
  computational advertising.
\newblock {\em Journal of Machine Learning Research}, 14(1):3207--3260, 2013.

\bibitem{carpenter2000bootstrap}
James Carpenter and John Bithell.
\newblock Bootstrap confidence intervals: when, which, what? a practical guide
  for medical statisticians.
\newblock {\em Statistics in Medicine}, pages 1141--1164, 2000.

\bibitem{chambless2003coronary}
Lloyd~E Chambless, Aaron~R Folsom, A~Richey Sharrett, Paul Sorlie, David
  Couper, Moyses Szklo, and F~Javier Nieto.
\newblock Coronary heart disease risk prediction in the atherosclerosis risk in
  communities (aric) study.
\newblock {\em Journal of clinical epidemiology}, 56(9):880--890, 2003.

\bibitem{chow2015robust}
Yinlam Chow, Marek Petrik, and Mohammad Ghavamzadeh.
\newblock Robust policy optimization with baseline guarantees.
\newblock {\em arXiv preprint arXiv:1506.04514}, 2015.

\bibitem{dudik2011doubly}
Miroslav Dud\'{i}k, John Langford, and Lihong Li.
\newblock Doubly robust policy evaluation and learning.
\newblock In {\em Proceedings of the 28th International Conference on Machine
  Learning, ICML}, 2011.

\bibitem{efron1979bootstrap}
B~et~al. Efron.
\newblock Bootstrap methods: Another look at the jackknife.
\newblock {\em The Annals of Statistics}, 7(1):1--26, 1979.

\bibitem{efron1987better}
Bradley Efron.
\newblock Better bootstrap confidence intervals.
\newblock {\em Journal of the American statistical Association},
  82(397):171--185, 1987.

\bibitem{fu2014probably}
Jie Fu and Ufuk Topcu.
\newblock Probably approximately correct mdp learning and control with temporal
  logic constraints.
\newblock In {\em Proceedings of Robotics: Science and Systems Conference},
  2014.

\bibitem{hester2010real}
Todd Hester and Peter Stone.
\newblock Real time targeted exploration in large domains.
\newblock In {\em The Ninth International Conference on Development and
  Learning (ICDL)}, August 2010.

\bibitem{jiang2015doubly}
Nan Jiang and Lihong Li.
\newblock Doubly robust off-policy evaluation for reinforcement learning.
\newblock In {\em Proceedings of the 33rd International Conference on Machine
  Learning, ICML}, 2015.

\bibitem{kearns2002near}
Michael Kearns and Satinder Singh.
\newblock Near-optimal reinforcement learning in polynomial time.
\newblock {\em Machine Learning}, 49(2-3):209--232, 2002.

\bibitem{paduraru2013off}
Cosmin Paduraru.
\newblock {\em Off-policy Evaluation in Markov Decision Processes}.
\newblock PhD thesis, McGill University, 2013.

\bibitem{precup2000eligibility}
Doina Precup, Richard~S. Sutton, and Satinder Singh.
\newblock Eligibility traces for off-policy policy evaluation.
\newblock In {\em Proceedings of the 17th International Conference on Machine
  Learning}, 2000.

\bibitem{ross2012agnostic}
Stephane Ross and J.~Andrew Bagnell.
\newblock Agnostic system identification for model-based reinforcement
  learning.
\newblock In {\em 29th International Conference on Machine Learning, ICML},
  2012.

\bibitem{strehl2009reinforcement}
Alexander~L Strehl, Lihong Li, and Michael~L Littman.
\newblock Reinforcement learning in finite mdps: Pac analysis.
\newblock {\em Journal of Machine Learning Research}, 10:2413--2444, 2009.

\bibitem{SuttonBarto98}
Richard~S. Sutton and Andrew~G. Barto.
\newblock {\em Reinforcement Learning: An Introduction}.
\newblock MIT Press, 1998.

\bibitem{thomas2015off-policy}
P.~S. Thomas, Georgios Theocharous, and Mohammad Ghavamzadeh.
\newblock High confidence off-policy evaluation.
\newblock In {\em Association for the Advancement of Artificial Intelligence,
  AAAI}, 2015.

\bibitem{thomas2015policy}
P.~S. Thomas, Georgios Theocharous, and Mohammad Ghavamzadeh.
\newblock High confidence policy improvement.
\newblock In {\em Proceedings of the 32nd International Conference on Machine
  Learning, ICML}, 2015.

\bibitem{thomas2015safe}
P.S. Thomas.
\newblock {\em Safe Reinforcement Learning}.
\newblock PhD thesis, University of Massachusetts Amherst, 2015.

\bibitem{thomas2016data-efficient}
P.S. Thomas and Emma Brunskill.
\newblock Data-efficient off-policy policy evaluation for reinforcement
  learning.
\newblock In {\em Proceedings of the 33rd International Conference on Machine
  Learning, ICML}, 2016.

\bibitem{white2010interval}
Martha White and Adam White.
\newblock Interval estimation for reinforcement-learning algorithms in
  continuous-state domains.
\newblock In {\em Advances in Neural Information Processing Systems}, pages
  2433--2441, 2010.

\end{thebibliography}

\end{document}